\newacronym{MDP}{MDP}{Markov decision process}
\newacronym{RL}{RL}{reinforcement learning}
\newacronym{DRL}{DRL}{deep reinforcement learning}
\newacronym{SAC}{SAC}{Soft Actor-Critic}
\newacronym{DQN}{DQN}{Deep Q-Network}
\newacronym{NNs}{NNs}{neural networks}
\newacronym{KL-divergence}{KL-divergence}{Kullback-Leibler divergence}
\newacronym{CO}{CO}{combinatorial optimization}
\newacronym{CVaR}{CVaR}{conditional value at risk}
\newacronym{RHS}{RHS}{right-hand side}
\newtheorem{proposition}{Proposition}
\title{\Huge{Risk-Sensitive Soft Actor-Critic for Robust Deep Reinforcement Learning under Distribution Shifts}}
\author{\Large{Tobias Enders}}
\affil{\normalsize{Technical University of Munich, Munich, Germany, \texttt{tobias.enders@tum.de}}}
\author{James Harrison}
\affil{Google DeepMind, San Francisco, CA, USA, \texttt{jamesharrison@google.com}}
\author{Maximilian Schiffer}
\affil{Technical University of Munich, Munich, Germany, \texttt{schiffer@tum.de}}
\date{}
\begin{document}

\twocolumn[
  \begin{@twocolumnfalse}
  
\maketitle

\begin{abstract}
\normalsize
\noindent
We study the robustness of deep reinforcement learning algorithms against distribution shifts within contextual multi-stage stochastic combinatorial optimization problems from the operations research domain. In this context, risk-sensitive algorithms promise to learn robust policies. While this field is of general interest to the reinforcement learning community, most studies up-to-date focus on theoretical results rather than real-world performance. With this work, we aim to bridge this gap by formally deriving a novel risk-sensitive deep reinforcement learning algorithm while providing numerical evidence for its efficacy. Specifically, we introduce discrete Soft Actor-Critic for the entropic risk measure by deriving a version of the Bellman equation for the respective $Q$-values. We establish a corresponding policy improvement result and infer a practical algorithm. We introduce an environment that represents typical contextual multi-stage stochastic combinatorial optimization problems and perform numerical experiments to empirically validate our algorithm's robustness against realistic distribution shifts, without compromising performance on the training distribution. We show that our algorithm is superior to risk-neutral Soft Actor-Critic as well as to two benchmark approaches for robust deep reinforcement learning. Thereby, we provide the first structured analysis on the robustness of reinforcement learning under distribution shifts in the realm of contextual multi-stage stochastic combinatorial optimization problems. \\
\newline
\textbf{Keywords:} deep reinforcement learning, robustness, risk-sensitivity, distribution shifts, combinatorial optimization problems
\end{abstract}

\vspace{0.5in}

  \end{@twocolumnfalse}
]


\section{Introduction}

Model-free \gls{DRL} has recently been used increasingly to solve contextual multi-stage stochastic \gls{CO} problems from the operations research domain. Such problems arise, among others, in inventory control \cite{Vanvuchelen2020, Gijsbrechts2022, DeMoor2022} or control of mobility on demand systems \cite{Tang2019, Gammelli2021, SadeghiEshkevari2022, Enders2023}, where an agent must solve a \gls{CO} problem at each time step to decide its policy. In this context, \gls{DRL} benefits from using \gls{NNs}, which are well suited to represent and learn complex policies in such contextual environments. However, a policy trained by \gls{DRL} to solve a respective \gls{MDP} is sensitive to changes of environment parameters \cite{Iyengar2005}. Yet, the resulting question of how to improve \gls{DRL}'s robustness against such disturbances has received limited attention in the literature on \gls{DRL} for multi-stage stochastic \gls{CO} so far. With this work, we take a first step to close this research gap.

While works on robust \gls{RL} exist, they are often tailored to robotics applications and typically focus on topics such as safety during exploration, robustness against action perturbations, or robustness against adversarial attacks. In contrast, robustness against distribution shifts is particularly relevant for contextual multi-stage stochastic \gls{CO} problems. Such distribution shifts can occur because of unforeseeable changes in the environment, e.g., altered transportation patterns after the surge of Covid or disruptions of supply chains triggered by geopolitical events. Besides, imperfect simulators can lead to distribution shifts when training in simulation before deployment in the real world. Additionally, continuous action spaces are prevalent in robotics, while we often encounter discrete action spaces in \gls{CO} problems. Furthermore, existing works on robust \gls{RL} often focus on finding the optimal policy for a worst-case scenario. Instead, in a \gls{CO} context, one focuses on the tradeoff between learning a policy that consistently achieves high expected returns on the training distribution and a policy that is robust against distribution shifts at the price of lower yet good performance across distributions. In the following, we develop a methodology that provides a principled approach to explicitly control this consistency-robustness tradeoff.

\subsection{Related work} \label{sec:literature}

Multiple approaches to improve the robustness of \gls{RL} exist, see the extensive review in \cite{Moos2022}. However, most of them have limited applicability in practice, since they are not scalable with \gls{NNs} as function approximators or require lots of additional machinery, e.g., solving a bi-level optimization problem or introducing adversaries. Alternative approaches are model-based or do not give control over the consistency-robustness tradeoff. In contrast, we focus on efficient and tractable approaches that can build on state-of-the-art model-free \gls{DRL} algorithms to achieve robustness under realistic distribution shifts. Some approaches can fulfill these requirements: manipulation of training data, entropy regularization, and risk-sensitive \gls{RL}. We use the former two approaches as benchmarks and focus our study on risk-sensitive \gls{RL}. Thus, we defer the discussion of further details on the benchmarks to the experiments and results sections and focus the remaining discussion on risk-sensitive \gls{RL}. 

In risk-sensitive \gls{RL}, one usually optimizes a risk measure, e.g., mean-variance or \gls{CVaR}, of the return instead of the expected return. Theoretical connections exist between risk-sensitive and robust \gls{RL} when optimizing the \gls{CVaR} \cite{Chow2015} as well as for coherent risk measures and risk-sensitivity under expected exponential utility functions \cite{Osogami2012}. Other works report promising empirical results for risk-sensitive \gls{RL} algorithms, see, e.g., \cite{Zhang2021, Noorani2022}. Moreover, a risk-sensitive \gls{RL} approach can lead to algorithms similar to risk-neutral \gls{RL}, thus requiring little extra implementation effort. Depending on the risk measure, risk-sensitive \gls{RL} algorithms can have a hyperparameter controlling the tradeoff between expected return and risk-sensitivity. 

Some works consider a constrained optimization problem, where the expected return is maximized while constraining a risk measure of the returns to set a threshold for unwanted outcomes, see, e.g., \cite{Prashanth2013, Prashanth2014, Prashanth2016, Yang2021}. This approach has a strong focus on safety and/or worst-case outcomes and thus fits our purposes less than directly optimizing a risk-sensitive objective function. An approach to optimize risk-sensitive objective functions is distributional \gls{RL}, see, e.g., \cite{Ma2020, Singh2020, Urpi2021}. It has the advantage that the learned distributional information can be used to consider any risk measure, i.e., the proposed algorithms are agnostic to the choice of risk measure. However, this comes at the cost of additional algorithmic complexity.

Thus, we focus on approaches that directly optimize a risk-sensitive objective. To do so, \cite{Howard1972, Jacquette1976, Patek2001} use dynamic programming, assuming knowledge of the underlying \gls{MDP}. The authors of \cite{Tamar2012, Chow2014, Chow2015} propose policy gradient and/or actor-critic and/or approximate value iteration algorithms for objective functions based on variance or \gls{CVaR}. In \cite{Tamar2015}, a policy gradient and actor-critic algorithm is derived for the whole class of coherent risk measures \cite{Artzner1999}. All of these works date back to the year 2015 or earlier and develop basic \gls{RL} algorithms, rather than building on today's state-of-the-art. In particular, they do not include \gls{NNs} as function approximators to deal with large state spaces. The reported experiments focus on robustness under action perturbations or risk-sensitivity in finance applications without a relation to robustness or a \gls{CO} context. 

Recently, exponential criteria and the entropic risk measure have been increasingly used as a risk-sensitive objective, with \cite{Fei2020, Fei2021, Fei2021a} focusing on theoretical regret analysis rather than practical state-of-the-art algorithms or experimental results. The authors of \cite{Nass2019} derive a policy gradient algorithm for the entropic risk measure and apply it in a robotics context. In \cite{Noorani2021}, a risk-sensitive variant of the REINFORCE algorithm for exponential criteria is introduced and tested on the Cart Pole and Acrobot environments. The risk-sensitive algorithm outperforms its risk-neutral counterpart, even though the environment at test time is the same as during training (no disturbance). This work is extended to an actor-critic algorithm, which outperforms its risk-neutral counterpart on Cart Pole and Acrobot environments when they are disturbed by varying the pole lengths during testing \cite{Noorani2022}. Although these works use \gls{NNs} for function approximation, they are still the risk-sensitive counterparts to basic instead of state-of-the-art risk-neutral \gls{RL} algorithms. This is not the case for \cite{Zhang2021}, which develops a risk-sensitive version of TD3 for a mean-variance objective and evaluates its performance on MuJoCo environments with disturbed actions. However, this algorithm is not compatible with a discrete action space, which is the focus of our study in a \gls{CO} context.

Concluding, to the best of our knowledge, no model-free risk-sensitive \gls{DRL} algorithm for discrete actions that is based on the state-of-the-art in risk-neutral \gls{DRL} exists. Moreover, none of the existing works investigates robustness against distribution shifts, and most works on risk-sensitive \gls{RL} compare the performance of risk-sensitive \gls{RL} algorithms only to their risk-neutral counterparts. Finally, there exists no published work with a structured analysis of the robustness of \gls{RL} in \gls{CO} problems. 

\subsection{Contributions}

We aim to close the research gap outlined above by introducing a novel risk-sensitive \gls{DRL} algorithm: discrete \gls{SAC} for the entropic risk measure, which effectively learns policies that exhibit robustness against distribution shifts. Specifically, we derive a version of the Bellman equation for $Q$-values for the entropic risk measure. We establish a corresponding policy improvement result and infer a practical model-free, off-policy algorithm that learns from single trajectories. From an implementation perspective, our algorithm requires only a small modification relative to risk-neutral \gls{SAC} and is therefore easily applicable in practice. Furthermore, our algorithm allows to control the consistency-robustness tradeoff through a hyperparameter. For empirical evaluation, we propose a grid world environment that abstracts multiple relevant contextual multi-stage stochastic \gls{CO} problems. 

We show that our algorithm improves robustness against distribution shifts without performance loss on the training distribution compared to risk-neutral \gls{SAC}. Moreover, we evaluate our algorithm in comparison to two other practically viable approaches to achieve robustness: manipulating the training data and entropy regularization. The performance analysis of these approaches within our environment under distribution shifts is of independent interest. While manipulating the training data leads to good empirical results, it is less generally applicable than our risk-sensitive algorithm and entropy regularization. Entropy regularization achieves better robustness but worse performance on the training distribution compared to our risk-sensitive algorithm. To facilitate a direct comparison of the two approaches, we study the weighted average of the performance on the training distribution and the performance under distribution shifts: our risk-sensitive algorithm outperforms entropy regularization if we assign at least 37\% weight to the performance on the training distribution. Overall, we provide the first structured analysis of the robustness of \gls{RL} under distribution shifts in a \gls{CO} context. To foster future research and ensure reproducibility, our code is publicly available at \url{https://github.com/tumBAIS/RiskSensitiveSACforRobustDRLunderDistShifts}.

\section{Risk-sensitive Soft Actor-Critic for discrete actions} \label{sec:methodology}

We base our novel risk-sensitive \gls{DRL} algorithm on the variant of \gls{SAC} \cite{Haarnoja2018} for discrete actions \cite{Christodoulou2019}. We chose \gls{SAC} for three reasons: firstly, it is a state-of-the-art algorithm with very good performance across a wide range of environments \cite{Haarnoja2018c, Christodoulou2019, Iqbal2019, Wong2021, Sun2022, Enders2023, Yang2023, Hu2023}; secondly, it is an off-policy algorithm and consequently more sample-efficient than on-policy algorithms; thirdly, it already incorporates entropy regularization. The last reason makes \gls{SAC} particularly well suited for our robustness analysis: entropy regularization can increase robustness, as shown theoretically in \cite{Eysenbach2022} and empirically in \cite{Haarnoja2018a, Haarnoja2018b, Eysenbach2022}. Consequently, we can benchmark the robustness of our risk-sensitive algorithm with turned off entropy regularization against the robustness of risk-neutral \gls{SAC} with different intensities of entropy regularization. 

In the remainder, we use the following basic notation: $t$ denotes a time step, $s$ a state, $a$ an action, $r$ a reward, $d$ a done signal, $\gamma$ the discount factor, and $s'$ the next state if there is no time index when considering a single transition. We denote a (stochastic) policy by $\pi$ and its entropy given state $s$ by $\mathcal H\left(\pi\left(\cdot\lvert s\right)\right)$. Moreover, $\alpha\geq0$ is the entropy coefficient hyperparameter, $\Pi$ denotes the space of tractable policies and $Q^\pi$ the $Q$-values under policy~$\pi$. The notation $\left(s_{t+1},...\right)\sim\rho_\pi$ refers to sampling a state-action trajectory, i.e., $s_{t+1}\sim p(s_{t+1}\lvert s_t,a_t),\, a_{t+1}\sim\pi(a_{t+1}\lvert s_{t+1}),\, s_{t+2}\sim p(s_{t+2}\lvert s_{t+1},a_{t+1}),\, ...\, ,\, s_T\sim p(s_T\lvert s_{T-1},a_{T-1})$, where $p$ denotes the state transition probability distribution and $T\in\mathbb N\cup\{\infty\}$ the terminal time step. Furthermore, $D_\text{KL}(p\lvert\lvert q)$ is the \gls{KL-divergence} of probability distribution $q$ from probability distribution $p$, while $D$ is the replay buffer. When considering a parameterized policy $\pi_\phi$ or parameterized $Q$-values $Q_\theta$, we denote the actor network parameters by $\phi$ and critic network parameters by $\theta$, as well as target critic network parameters by $\overline\theta$. When we write $\pi(s)$ instead of $\pi(a\lvert s)$ or $Q(s)$ instead of $Q(s,a)$, we refer to the vector of all action probabilities or the vector of $Q$-values for all actions, respectively, given state $s$ (as opposed to the single entry of this vector for the specific action $a$).

\subsection{Risk-neutral Soft Actor-Critic} \label{sec:risk_neutral_SAC}

We provide a short summary of the risk-neutral \gls{SAC} algorithm for discrete actions, before deriving our risk-sensitive version. \gls{SAC} is an off-policy \gls{RL} algorithm which concurrently trains an actor network that parameterizes a stochastic policy, i.e., a probability distribution over all possible actions, and a critic network that outputs the $Q$-values for all possible actions, given an input state. It regularizes rewards with an additional entropy term to explicitly incentivize exploration, such that the optimization objective reads
\begin{equation*}
    \max_\pi \mathbb E_\pi \left[ \sum_{t=0}^\infty \gamma^t\left(r\left(s_t,a_t\right) + \alpha\mathcal H\left(\pi\left(\cdot\lvert s_t\right)\right)\right) \right] .
\end{equation*}
The entropy coefficient $\alpha$ controls the trade-off between rewards and the entropy. It can be set as a hyperparameter or learned such that the resulting policy has a certain target entropy chosen based on some heuristic. In the remainder, we use the former option. 

In the discrete actions setting, the loss functions for the actor and the critic read
\begin{align}
    J_\pi(\phi) &= \mathbb E_{s\sim D}\left[ \pi_\phi(s)^T\cdot\left( \alpha\log \pi_\phi(s) - Q_\theta(s) \right) \right], \label{eq:policy_loss} \\
    J_Q(\theta) &= \mathbb E_{(s,a,r,d,s')\sim D}\left[\frac12\left( Q_\theta(s,a) - \hat Q \right)^2\right], \nonumber \\
    \hat Q &= r + (1-d)\gamma\cdot\pi_\phi(s')^T\cdot\left(Q_{\overline\theta} \left(s'\right) - \alpha\log\pi_\phi(s')\right). \nonumber
\end{align}
In practice, we train two (target) critic networks and use the minimum of the two $Q$-values in the policy loss and $\hat Q$ calculation, to mitigate the overestimation bias. We use this minimum of two critics for our risk-sensitive algorithm analogously but do not explicitly write the minimum of two $Q$-values for conciseness.

In the following, we firstly introduce our risk-sensitive objective function. Secondly, we derive a Bellman equation for the $Q$-values under this objective. Thirdly, we show how to achieve policy improvement. Finally, we derive a practical algorithm with function approximation based on these theoretical results.

\subsection{Risk-sensitive objective} \label{sec:objective}

Instead of the risk-neutral objective function, i.e., the expected value of the sum of discounted rewards, we use the entropic risk measure \cite{Howard1972, Jacquette1976, Jacobson1973, Whittle1981} as our risk-sensitive objective:
\begin{equation}
    \max_\pi \frac{1}{\beta} \log \mathbb E_\pi \left[ e^{\beta \cdot \sum_{t=0}^\infty \gamma^t\left(r\left(s_t,a_t\right) + \alpha\mathcal H\left(\pi\left(\cdot\lvert s_t\right)\right)\right)} \right]. \label{eq:objective}
\end{equation}
Here, we adjust the standard entropic risk measure by introducing entropy regularization of rewards to obtain a risk-sensitive variant of \gls{SAC}. Analogously to risk-neutral \gls{SAC}, entropy regularization allows us to explicitly control the exploration-exploitation tradeoff in our risk-sensitive algorithm via the entropy coefficient. The hyperparameter $\beta\in\mathbb R$ controls the risk-sensitivity of the objective:
\begin{equation}
     \frac{1}{\beta} \log \mathbb E \left[ e^{\beta R} \right] = \mathbb E[R] + \frac{\beta}{2}\mathrm{Var}[R] + \mathcal O\left(\beta^2\right) , \label{eq:taylor_expansion}
\end{equation}
where we set $R = \sum_{t=0}^\infty \gamma^t \cdot r\left(s_t,a_t\right)$. Since the variance measures uncertainty, we obtain a risk-averse objective function for $\beta<0$, for $\beta>0$ it is risk-seeking, and $\beta\rightarrow0$ recovers the common risk-neutral objective.

In principle, we could use any risk measure. However, the entropic risk measure is a natural choice, particularly for operations research applications, as it is the certainty-equivalent expectation of the exponential utility function, see \cite{Howard1972, Jacquette1976}. Risk-sensitive \gls{MDP}s with the expected exponential utility function as optimization objective are closely connected to robust \gls{MDP}s \cite{Osogami2012}. Also, the entropic risk measure is a convex risk measure, thus fulfilling multiple desirable properties from a mathematical risk management perspective, even though it is not coherent \cite{Artzner1999}. Furthermore, alternative measures such as \gls{CVaR} are computed solely based on the tail of the return distribution, using only a small portion of the data. Contrarily, the entropic risk measure bases on all data, which is important in an \gls{RL} context, where sample efficiency is a major concern. Besides, \gls{CVaR} does not give explicit control over the tradeoff between expected return and risk, as opposed to the entropic risk measure: Equation~\eqref{eq:taylor_expansion} reveals that the entropic risk measure allows to explicitly control this tradeoff by setting $\beta$ accordingly.

Moreover, we will show below that the entropic risk measure is well suited to derive a practical \gls{RL} algorithm without adding a lot of machinery. The resulting algorithm can still leverage a lot of valuable techniques developed for risk-neutral \gls{RL}, like off-policy learning with experience replay and entropy regularization for effective exploration. 

Since we theoretically expect risk-averse rather than risk-seeking behavior to improve robustness, we use negative values for $\beta$ in the empirical evaluation. Nevertheless, the theoretical results which we derive in the following as well as the practical algorithm also apply to the risk-seeking case with $\beta>0$.

In the following, we use two approximations (see Appendix~\ref{app:approximations} for details): (i) for $\beta$ close to zero and a real-valued random variable $X$, we get $\mathbb E\left[e^{\beta X}\right] \approx e^{\beta\, \mathbb E\left[X\right]}$; (ii) for $\gamma$ close to one, it holds that $\mathbb E\left[X^\gamma\right] \approx \left(\mathbb E[X]\right)^\gamma$.

\subsection{Bellman equation}
To facilitate value iteration, we derive a Bellman equation for the $Q$-values under our risk-sensitive objective. We want to use the Bellman equation as the basis for a model-free \gls{RL} algorithm that can learn from single trajectories. Thus, the Bellman equation should take the form $Q^\pi\!\! \left(s_t,a_t\right) = \mathbb E_{\left(s_{t+1},a_{t+1}\right)\sim\rho_\pi} \left[ \cdot \right]$, which allows to obtain an unbiased sampling-based estimate of the \gls{RHS}. With the logarithm in Equation~\eqref{eq:objective}, we cannot obtain a Bellman equation of that form. Thus, we define
\begin{equation}
    \overline Q^\pi\!\! (s_t,a_t) := e^{\beta \cdot Q^\pi\!\! (s_t,a_t)} \iff Q^\pi\!\! \left(s_t,a_t\right) = \frac{1}{\beta} \log \overline Q^\pi\!\!\left(s_t,a_t\right) \label{eq:Qbar}
\end{equation}
and derive a Bellman equation for $\overline Q$. While we use the same rationale as \cite{Noorani2022} here, we cannot use this work's result, as it considers $V$-values and does not incorporate entropy regularization. In the following, we use the notation $\overline Q^\pi_t := \overline Q^\pi\!\!\left(s_t,a_t\right)$, $r_t:=r\left(s_t,a_t\right)$, and $\mathcal H^\pi_t := \mathcal H\left(\pi\left(\cdot\lvert s_t\right)\right)$ to save space.

\begin{proposition}[Bellman equation]
    For the risk-sensitive objective in Equation~\eqref{eq:objective}, $\gamma$ close to one, and with $\overline Q$ as defined in Equation~\eqref{eq:Qbar}, it holds that 
    \begin{equation}
        \overline Q^\pi_t = \mathbb E_{\left(s_{t+1},a_{t+1}\right)\sim\rho_\pi} \left[ \exp\left( \beta r_t + \beta\gamma\alpha\mathcal H^\pi_{t+1} + \gamma\log\overline Q^\pi_{t+1} \right)\right] . \label{eq:bellman}
    \end{equation}
\end{proposition}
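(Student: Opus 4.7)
My plan is to derive the Bellman equation by unrolling the definition of $\overline Q^\pi$ and exploiting the additive structure of the entropy-regularized return, with approximation~(ii) carrying the essential weight. To begin, I will combine Equation~\eqref{eq:Qbar} with the risk-sensitive objective to write
\begin{equation*}
\overline Q^\pi_t = \mathbb E_\pi\!\left[\exp\!\left(\beta r_t + \beta\textstyle\sum_{k=1}^\infty \gamma^k\bigl(r_{t+k} + \alpha\mathcal H^\pi_{t+k}\bigr)\right) \Big| s_t,a_t\right],
\end{equation*}
where the expectation runs over the full trajectory from $(s_t,a_t)$ under $\pi$, and the time-$t$ entropy term is absent because $a_t$ is fixed.

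I will then isolate the time $t{+}1$ contribution by factoring the tail of the exponent as $\gamma(\alpha\mathcal H^\pi_{t+1} + G_{t+1})$, where $G_{t+1}$ denotes the remaining entropy-regularized return from $(s_{t+1},a_{t+1})$. Pulling the deterministic factor $e^{\beta r_t}$ out, applying the tower property while conditioning on $(s_{t+1},a_{t+1})$, and using that $\mathcal H^\pi_{t+1}$ is $s_{t+1}$-measurable to move $e^{\beta\gamma\alpha\mathcal H^\pi_{t+1}}$ outside the inner conditional expectation leaves $\mathbb E[e^{\beta\gamma G_{t+1}} \mid s_{t+1},a_{t+1}]$ as the only remaining object.

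The decisive step, and the main technical obstacle, is handling this inner expectation. Rewriting it as $\mathbb E[(e^{\beta G_{t+1}})^\gamma \mid s_{t+1},a_{t+1}]$, I would invoke approximation~(ii) ($\mathbb E[X^\gamma]\approx(\mathbb E[X])^\gamma$ for $\gamma$ close to one, precisely the standing assumption of the proposition) to exchange power and expectation, yielding $(\overline Q^\pi_{t+1})^\gamma$. Every other manipulation is either exact algebra or an application of the law of total expectation, so this single approximation is what enables a trajectory-based Bellman recursion and removes the need for a distributional treatment of the return.

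Finally, I will rewrite $(\overline Q^\pi_{t+1})^\gamma = \exp(\gamma \log \overline Q^\pi_{t+1})$, reabsorb the factors $e^{\beta r_t}$ and $e^{\beta\gamma\alpha\mathcal H^\pi_{t+1}}$ back into the outer expectation, and identify it as the expectation over $(s_{t+1},a_{t+1})\sim\rho_\pi$. Collecting the three exponentials into a single $\exp(\cdot)$ then matches Equation~\eqref{eq:bellman}. Note that approximation~(i) is not required at any point; the risk-sensitivity parameter $\beta$ is treated exactly throughout, which matters because the algorithm is intended to be applied with non-negligible $\beta$.
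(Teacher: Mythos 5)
Your proof is correct and follows essentially the same route as the paper's: unroll the definition of $\overline Q^\pi_t$, split off the time-$(t+1)$ contribution via the tower property, and apply approximation~(ii) once to move the power $\gamma$ outside the inner conditional expectation, yielding $(\overline Q^\pi_{t+1})^\gamma = \exp(\gamma\log\overline Q^\pi_{t+1})$. Your observation that approximation~(i) is not needed here also matches the paper, where it enters only in the policy improvement proof.
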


\begin{proof}
    With the definition of $Q$-values, we obtain 
    \footnotesize
    \begin{align*}
        \overline Q^\pi_t &= e^{\beta r_t}\cdot \mathbb E_{\left(s_{t+1},...\right)\sim\rho_\pi} \left[ \exp\left( \beta\cdot\sum_{l=1}^\infty \gamma^l \left( r_{t+l} + \alpha\mathcal H^\pi_{t+l} \right) \right) \right] \nonumber \\
        &\stackrel{\mathmakebox[\widthof{=}]{\text{(a)}}}{=} e^{\beta r_t}\cdot \mathbb E_{\left(s_{t+1},a_{t+1}\right)\sim\rho_\pi} \Biggl[ \exp\left(\beta\gamma\left( r_{t+1}+\alpha\mathcal H^\pi_{t+1}\right)\right)\Biggr. \nonumber \\
        & \hspace{1.9cm} \Biggl. \cdot \mathbb E_{\left(s_{t+2},...\right)\sim\rho_\pi} \left[ \exp\left( \beta\cdot\sum_{l=2}^\infty \gamma^l \left( r_{t+l} + \alpha\mathcal H^\pi_{t+l} \right) \right) \right]\Biggr] \nonumber \\
        &\stackrel{\mathmakebox[\widthof{=}]{\text{(b)}}}{=} e^{\beta r_t}\cdot \mathbb E_{\left(s_{t+1},a_{t+1}\right)\sim\rho_\pi} \Biggl[ \exp\left(\beta\gamma\alpha\mathcal H^\pi_{t+1}\right) \Biggr. \nonumber \\
        & \hspace{-0.03cm} \Biggl. \cdot \left( e^{\beta r_{t+1}} \cdot \mathbb E_{\left(s_{t+2},...\right)\sim\rho_\pi} \left[ \exp\left( \beta\cdot\sum_{l=2}^\infty \gamma^{l-1} \left( r_{t+l} + \alpha\mathcal H^\pi_{t+l} \right) \right) \right]\right)^\gamma\Biggr] \nonumber \\
        &\stackrel{\mathmakebox[\widthof{=}]{\text{(c)}}}{=} e^{\beta r_t}\cdot \mathbb E_{\left(s_{t+1},a_{t+1}\right)\sim\rho_\pi} \left[ \exp\left(\beta\gamma\alpha\mathcal H^\pi_{t+1}\right) \cdot \left( \overline Q^\pi_{t+1}\right)^\gamma\right] \nonumber \\
        &= \mathbb E_{\left(s_{t+1},a_{t+1}\right)\sim\rho_\pi} \left[ \exp\left( \beta r_t + \beta\gamma\alpha\mathcal H^\pi_{t+1} + \gamma\log\overline Q^\pi_{t+1} \right)\right].
    \end{align*}
    \normalsize
    Equality (a) is based on the observation that the first term in the sum does not depend on the transitions after $t+1$. Equality (b) follows from Approximation (ii). Equality (c) uses the definition of $Q$-values for time step $t+1$. 
\end{proof}

\subsection{Policy improvement}
Given the $Q$-values obtained under an old policy $\pi_\text{old}$, we obtain a new policy as
\begin{align}
    \pi_\text{new} & = \arg\min_{\pi'\in\Pi} D_\text{KL} \left( \pi'\left(\cdot\lvert s_t\right) \Bigg\lvert\Bigg\lvert \frac{e^{\frac{1}{\alpha}Q^{\pi_\text{old}}\!\left(s_t,\cdot\right)}}{Z^{\pi_\text{old}}\left(s_t\right)}\right) \nonumber \\
    & = \arg\min_{\pi'\in\Pi} J_{\pi_\text{old}}\left(\pi'\left(\cdot\lvert s_t\right)\right) , \label{eq:new_policy}
\end{align}
exactly as in the original \gls{SAC} paper \cite{Haarnoja2018}. Here, the partition function $Z^{\pi_\text{old}}\left(s_t\right)$ normalizes the distribution. We show that despite our changed objective function and thus different Bellman equation, this definition of a new policy still implies policy improvement. 

\begin{proposition}[Policy improvement] \label{prop:policy_improvement}
    For an old policy $\pi_\text{old}$ and the new policy $\pi_\text{new}$ as defined in Equation~\eqref{eq:new_policy}, it holds that $Q^{\pi_\text{new}}(s_t,a_t) \geq Q^{\pi_\text{old}}(s_t,a_t)$ for any state $s_t$ and action $a_t$, assuming that $\gamma$ is close to one and $\beta$ is close to zero.
\end{proposition}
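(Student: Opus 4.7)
The plan is to mirror the classical soft policy improvement argument of \cite{Haarnoja2018}, using Proposition~1 together with approximation (i) in place of the risk-neutral Bellman equation. First I would exploit the optimality of $\pi_{\text{new}}$ in Equation~\eqref{eq:new_policy}: since $\pi_{\text{old}}\in\Pi$ is feasible for the KL minimization, we have $J_{\pi_{\text{old}}}(\pi_{\text{new}}(\cdot|s_t))\leq J_{\pi_{\text{old}}}(\pi_{\text{old}}(\cdot|s_t))$. Expanding both sides via Equation~\eqref{eq:policy_loss}, cancelling the (policy-independent) log-partition of $Z^{\pi_{\text{old}}}$, and rearranging yields the pointwise inequality
\begin{equation*}
\mathbb{E}_{a\sim\pi_{\text{old}}}\!\left[Q^{\pi_{\text{old}}}(s_t,a)\right] + \alpha\,\mathcal H(\pi_{\text{old}}(\cdot|s_t)) \;\leq\; \mathbb{E}_{a\sim\pi_{\text{new}}}\!\left[Q^{\pi_{\text{old}}}(s_t,a)\right] + \alpha\,\mathcal H(\pi_{\text{new}}(\cdot|s_t)).
\end{equation*}
Writing $V^\pi(s):=\mathbb{E}_{a\sim\pi}[Q^\pi(s,a)]+\alpha\,\mathcal H(\pi(\cdot|s))$, this is a uniform upper bound on $V^{\pi_{\text{old}}}(s_t)$.

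Second, I would linearize the risk-sensitive Bellman equation of Proposition~1. Taking $\tfrac1\beta\log$ of Equation~\eqref{eq:bellman} and applying approximation (i), $\mathbb{E}[e^{\beta X}]\approx e^{\beta\,\mathbb{E}[X]}$, which holds for $\beta$ close to zero, the outer $\log$ commutes past the expectation; combined with Equation~\eqref{eq:Qbar} this collapses to the familiar additive soft Bellman form
\begin{equation*}
Q^\pi(s_t,a_t) \;\approx\; r_t + \gamma\,\mathbb{E}_{s_{t+1}}\!\left[V^\pi(s_{t+1})\right].
\end{equation*}
So under the proposition's hypotheses, the risk-sensitive $Q$-function obeys the standard soft Bellman recursion up to $\mathcal O(\beta)$ error, which is what makes the rest of the argument go through.

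Third, I would iterate. Starting from $Q^{\pi_{\text{old}}}(s_t,a_t)=r_t+\gamma\mathbb{E}_{s_{t+1}}[V^{\pi_{\text{old}}}(s_{t+1})]$, replace $V^{\pi_{\text{old}}}(s_{t+1})$ by the upper bound of the first step, apply the same bound recursively to each successor $Q^{\pi_{\text{old}}}(s_{t+l},a_{t+l})$ encountered, and take the telescoping limit (which requires boundedness of rewards and of the per-step entropy, standard under discrete action spaces). This produces
\begin{equation*}
Q^{\pi_{\text{old}}}(s_t,a_t) \;\leq\; \mathbb{E}_{(s_{t+1},\ldots)\sim\rho_{\pi_{\text{new}}}}\!\left[\sum_{l=0}^{\infty}\gamma^l\!\left(r_{t+l}+\alpha\,\mathcal H(\pi_{\text{new}}(\cdot|s_{t+l+1}))\right)\right] \;=\; Q^{\pi_{\text{new}}}(s_t,a_t),
\end{equation*}
where the final equality re-applies the linearized Bellman equation under $\pi_{\text{new}}$.

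The main obstacle is that the risk-sensitive Bellman equation of Proposition~1 is multiplicative/exponential rather than additive, so a direct substitution of $V$-value inequalities does not propagate cleanly through the nested $\log$-$\mathbb{E}$-$\exp$ operators that would appear in an exact risk-sensitive treatment. Approximation (i) is precisely what licenses the reduction to the additive setting and is therefore doing the heavy lifting; without it one would have to verify order-preservation of the log-sum-exp "soft-max" in Equation~\eqref{eq:bellman} and control how approximation errors compound across stages. A minor secondary concern is ensuring that the cumulative $\mathcal O(\beta)$ error from repeatedly invoking approximation (i) does not flip the inequality over an infinite horizon, but this is absorbed into the small-$\beta$ regime assumed by the proposition, consistent with how the rest of the paper interprets these approximations.
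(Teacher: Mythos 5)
Your argument is correct at the level of rigor the paper itself adopts, but it takes a genuinely different route from the paper's proof. The paper stays in the exponential domain: starting from the same optimality inequality for the KL projection, it multiplies by $\beta$ (tracking the sign flip for $\beta<0$), exponentiates, and uses Approximation~(i) only to commute the exponential with the expectation over \emph{actions} at a single state, yielding $e^{\beta\alpha\mathcal H^{\pi_\text{new}}_t}\,\mathbb E_{a_t\sim\pi_\text{new}}\bigl[\overline Q^{\pi_\text{old}}_t\bigr] \leq e^{\beta\alpha\mathcal H^{\pi_\text{old}}_t}\,\mathbb E_{a_t\sim\pi_\text{old}}\bigl[\overline Q^{\pi_\text{old}}_t\bigr]$; it then iterates the multiplicative Bellman equation~\eqref{eq:bellman} for $\overline Q$ and converts back via $Q=\frac1\beta\log\overline Q$, handling $\beta<0$ and $\beta>0$ separately through the monotonicity of $x\mapsto e^{\beta x}$. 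You instead apply Approximation~(i) to the \emph{joint} next-state-and-action expectation inside the Bellman equation, which collapses the entropic recursion to the standard additive soft Bellman recursion, and then you import the classical soft policy improvement telescoping argument of Haarnoja et al.\ verbatim. What your route buys is simplicity and a clean reduction to a known result; what it costs is that the risk-sensitive structure is erased at the level of the Bellman operator itself (the conclusion holds only because, to first order in $\beta$, the objective is risk-neutral), whereas the paper's argument manipulates the exact exponential recursion except for the single per-step commute over actions and therefore retains the $\beta$-dependence and the sign distinction explicitly. Both proofs apply their approximations repeatedly along the infinite horizon without bounding the compounded error, so your closing caveat about cumulative $\mathcal O(\beta)$ error applies equally to the paper and does not disadvantage your version. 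Two small points to fix: in your final telescoped sum the entropy at $s_{t+l+1}$ should carry weight $\gamma^{l+1}$, not $\gamma^{l}$, to match the paper's convention $Q_t \approx r_t + \mathbb E\bigl[\sum_{l\geq1}\gamma^l\bigl(r_{t+l}+\alpha\mathcal H^\pi_{t+l}\bigr)\bigr]$; and the expansion of the KL objective is more naturally cited from Equation~\eqref{eq:new_policy} (with the partition function $Z^{\pi_\text{old}}$ cancelling) than from the parameterized loss~\eqref{eq:policy_loss}, though the two coincide in content.
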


\begin{proof}
    We sketch the proof here and provide details in Appendix~\ref{app:proof_policy_improvement}. 
    
    We can always choose $\pi_\text{new} = \pi_\text{old}$, such that $J_{\pi_\text{old}}\left(\pi_\text{new}\left(\cdot\lvert s_t\right)\right) \leq J_{\pi_\text{old}}\left(\pi_\text{old}\left(\cdot\lvert s_t\right)\right)$, which yields
    \begin{align*}
        &\mathbb E_{a_t\sim\pi_\text{new}} \left[Q^{\pi_\text{old}}_t-\alpha\log\pi_\text{new}\left(a_t\lvert s_t\right)\right] \nonumber \\
        &\geq \mathbb E_{a_t\sim\pi_\text{old}} \left[Q^{\pi_\text{old}}_t-\alpha\log\pi_\text{old}\left(a_t\lvert s_t\right)\right] .
    \end{align*}
    For $\beta<0$, this is equivalent to
    \begin{equation*}
        e^{\beta\alpha\mathcal H^{\pi_\text{new}}_t}\cdot\mathbb E_{a_t\sim\pi_\text{new}} \left[\overline Q^{\pi_\text{old}}_t\right] \leq e^{\beta\alpha\mathcal H^{\pi_\text{old}}_t}\cdot\mathbb E_{a_t\sim\pi_\text{old}} \left[\overline Q^{\pi_\text{old}}_t\right] .
    \end{equation*}
    Repeated application of the Bellman equation derived before and this inequality gives $\overline Q^{\pi_\text{old}}_t \geq \overline Q^{\pi_\text{new}}_t$, such that we obtain $Q^{\pi_\text{old}}_t \leq Q^{\pi_\text{new}}_t$. The proof for $\beta>0$ works analogously. 
\end{proof}

\subsection{Practical algorithm}
We can use the Bellman equation and the policy improvement result in a straightforward manner to obtain a practical risk-sensitive off-policy \gls{DRL} algorithm. It is similar to risk-neutral \gls{SAC}, with the following adjustments:

We can learn $\overline Q$ with the corresponding critic loss function
\begin{align*}
    J_{\overline Q}(\theta) & =\mathbb E_{(s,a,r,d,s')\sim D}\left[\frac12\left( \overline Q_\theta(s,a) - \hat Q \right)^2\right], \nonumber \\
    \hat Q & = \exp\left(\beta r-\beta\gamma\alpha\: \pi_\phi(s')^T\log\pi_\phi(s') \right) \cdot \pi_\phi(s')^T \left(\overline Q_{\overline\theta} \left(s'\right)\right)^\gamma ,
\end{align*}
where the target $\hat Q$ is an unbiased, sampling-based estimate of Equation~\eqref{eq:bellman}. We computed the entropy and the expectation over next actions directly, which is possible because of the discrete action space, such that we only sample the next state to estimate Equation~\eqref{eq:bellman}. If the done signal $d$ is $\mathrm{TRUE}$, we set the terms after $e^{\beta r}$ to one. To ensure that $\overline Q>0$, we use a softplus instead of a linear activation on the output layer of the critic networks.

Based on Equation~\eqref{eq:new_policy}, the policy loss function is the same as in risk-neutral \gls{SAC}, shown in Equation~\eqref{eq:policy_loss}, but we calculate the needed critic values as $Q_\theta (s) = \frac{1}{\beta} \log \overline Q_\theta(s)$ from the output $\overline Q_\theta(s)$ of the critic network(s). 

While the resulting algorithm is simple, its usability is limited since it is numerically unstable in practice as we empirically show in Appendix~\ref{app:results_Qbar}. We hypothesize that the numerical instability is caused by the fact that we learn $\overline Q$ instead of $Q$. To mitigate this numerical instability, we note that Equation~\eqref{eq:bellman} is equivalent to
\begin{align}
    Q^\pi_t & = \frac1\beta\log\left(\overline Q^\pi_t\right) \nonumber \\
    & = \frac1\beta\log\left(\mathbb E_{\left(s_{t+1},a_{t+1}\right)\sim\rho_\pi} \left[ e^{\beta \left( r_t + \gamma\left( \alpha\mathcal H^\pi_{t+1} + Q^\pi_{t+1} \right)\right)}\right]\right). \label{eq:bellman_new}
\end{align}
When learning $Q$, the \gls{RHS} of Equation~\eqref{eq:bellman_new} is the target in the mean squared error loss function. Due to the discrete action space, we can compute the expectation over next actions directly, given a next state. 

Still, we need a sampling-based estimate of the \gls{RHS}: since we want to obtain a model-free algorithm that learns from single trajectories, we cannot compute the expectation over next states directly. As the expectation appears inside the logarithm, replacing the expectation by a mean over samples does not give an unbiased estimate of the \gls{RHS}. We do so nevertheless and observe that this leads to a well-performing algorithm in practice. We have only one next state $s'$ corresponding to each state $s$ that we sample from the replay buffer to compute the loss. Consequently, we remove the expectation over next states in Equation~\eqref{eq:bellman_new} and replace $s_{t+1}$ by the next state $s'$ from the replay buffer. Then, the critic loss function becomes
\footnotesize
\begin{align*}
    J_Q(\theta) & =\mathbb E_{(s,a,r,d,s')\sim D}\left[\frac12\left( Q_\theta(s,a) - \hat Q \right)^2\right] , \\
    \hat Q & = \frac1\beta\log\Bigl( \pi_\phi(s')^T \Bigr. \\
    & \hspace{1.1cm} \Bigl. \cdot\exp\left(\beta\left( r + \gamma\left( Q_{\overline\theta} \left(s'\right) - \alpha\cdot\pi_\phi(s')^T\log\pi_\phi(s') \right)\right)\right)\Bigr) .
\end{align*}
\normalsize
The computation of this target $\hat Q$ has a log-sum-exp structure, which is known to be numerically unstable when implemented naively. Thus, we rewrite it as follows (see Appendix~\ref{app:log_sum_exp} for details):
\footnotesize
\begin{align*}
    \hat Q & = r - \gamma\alpha\cdot\pi_\phi(s')^T\log\pi_\phi(s') + \gamma \max_{a'}Q_{\overline\theta}(s',a') \nonumber \\
    & \quad +\frac1\beta\log\left(\sum_{a'} \pi_\phi(a'\lvert s')\cdot \exp\left(\beta\gamma\left( Q_{\overline\theta}(s',a') - \max_{a'}Q_{\overline\theta}(s',a') \right)\right) \right) .
\end{align*}
\normalsize
With this new critic loss function, we can learn $Q$ directly, without the need to learn $\overline Q$. The resulting algorithm is identical to risk-neutral \gls{SAC} for discrete actions, except for the adapted critic loss, which gives the intended risk-sensitivity. We provide a pseudocode in Appendix~\ref{app:pseudocode}.

\section{Experiments}
We test the proposed risk-sensitive \gls{SAC} algorithm for discrete actions on an environment that abstracts several multi-stage stochastic \gls{CO} problems. In the following, we introduce the environment, discuss our benchmark algorithms, and explain how we evaluate the algorithms' performance and robustness. For details on our state encoding, NN architectures, and hyperparameters, see Appendix~\ref{app:details_experiments}.

\subsection{Environment}
We consider a discrete time horizon comprising 200 time steps per episode. Our environment is a 2D grid containing 5x5 cells, in which an agent can move around freely in the four cardinal directions, i.e., the action space at each time step is \{no move, move up, move right, move down, move left\}. If an attempted move causes the agent to exit the grid, it remains in its current cell. Each move incurs a negative reward (cost) of -1. 

Within this grid, items appear stochastically based on a spatial probability distribution at each time step. We assume that the distribution remains unknown to the agent, which can only observe the resulting data upon interacting with the environment. The items disappear after a maximum response time of ten time steps. When the agent reaches a cell containing an item before it disappears, it collects the item and should then transport it to a fixed target location (see Figure~\ref{fig:environment_illustration}). Delivering an item to the target location leads to a positive reward (revenue) of +15. The agent cannot carry more than one item at a time. This problem setting satisfies the Markov property, since the probability that an item appears in a specific cell during the current time step does neither depend on the agent's past actions nor on which items appeared in prior time steps.

Our environment is an abstraction of various classical \gls{CO} problems requiring sequential online decision-making under uncertainty, e.g., various routing and dispatching problems in the context of warehouses, mobility on demand systems, (crowd-sourced) delivery, or ambulance operations. While our environment is simple, it is also general and thus well suited to study robustness of \gls{DRL} in multi-stage stochastic \gls{CO} problems. Moreover, our environment is well suited to investigate distribution shifts, which can occur in all aforementioned applications: Figure~\ref{fig:distributions} depicts the item-generating probability distributions considered here. We train on data sampled from the gradient~1 item distribution, while testing evaluates the trained model on data sampled from all twelve distributions, simulating a wide range of distribution shifts. 

One comment on the action space is in order: our agent takes the micro-decision to which neighboring cell to move next, rather than taking the macro-decision to which cell (even if it is not a neighboring one) to move next. The latter option might appear more natural, e.g., for the decision which item to collect. However, we intentionally chose the former option, as it is more general and any macro-decision can be reconstructed through a series of micro-decisions. 

\begin{figure}
    \centering
    \includegraphics[scale=0.5]{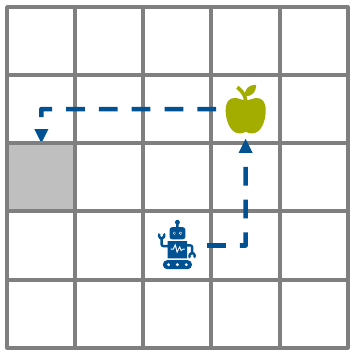}
    \caption{Illustration of the environment. The gray cell is the target location.}
    \label{fig:environment_illustration}
\end{figure}

\begin{figure}
    \centering
    \includegraphics[width=\linewidth]{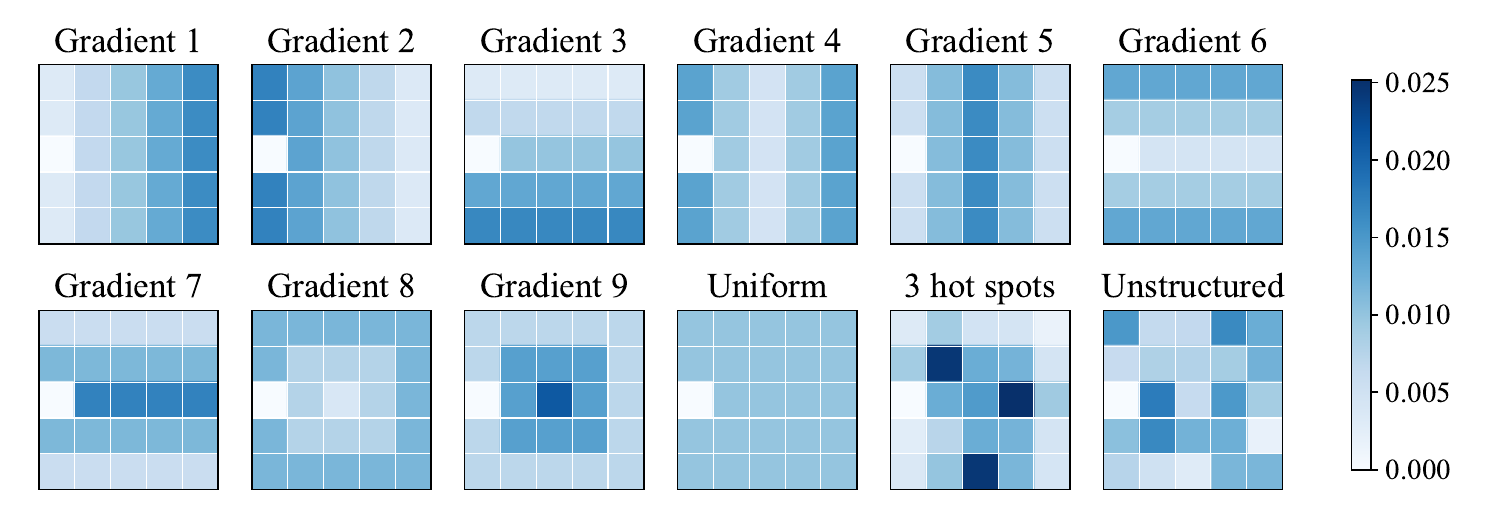}
    \caption{Per-time-step probability that an item appears in the respective cell, for twelve different item distributions.}
    \label{fig:distributions}
\end{figure}

\subsection{Benchmarks} \label{sec:benchmarks}
We compare our risk-sensitive \gls{SAC} algorithm against two benchmarks: manipulating the training data and entropy regularization. Moreover, we compare all three approaches to improve robustness against risk-neutral \gls{SAC} for discrete actions, which is our non-robust baseline algorithm. We further hypothesized that L2 regularization of the \gls{NNs}' parameters could improve robustness, as it is frequently used in (supervised) machine learning to use a complex model but reduce overfitting, and learning a policy that is less tailored to the training data might help robustness against distribution shifts. However, we did not find evidence for this hypothesis. We omit details on this important negative result in the main body of the paper, but provide them in Appendix~\ref{app:L2_reg}.

\paragraph{Manipulating the training data.} In supervised learning, manipulating the training data has been used successfully to achieve robustness, see \cite{Goodfellow2015, Tramer2018, Sinha2018}. We transfer this approach to \gls{RL} by inserting noise into the training process. We expect the policy to learn to perform well under this noise, generalize better, and thus be more robust against disturbances during testing. 

For distribution shifts in our environment, this means to replace a part of the item locations in the training data by item locations sampled from a different spatial distribution. Since we assume the distribution shift to be unknown a priori, we use the uniformly random distribution to sample the manipulated item locations. Specifically, we replace each item in the training data with a certain probability, for which we test a large range of different values. 

For a fair comparison, we train risk-neutral \gls{SAC} on the manipulated training data and test the resulting policy on the not-manipulated test data for all distributions in Figure~\ref{fig:distributions}. This includes gradient~1, to assess how manipulating the training data changes the trained policy's test performance without distribution shifts.

\paragraph{Entropy regularization.} The use of entropy regularization can increase the robustness of \gls{RL} against disturbances in the environment, as shown theoretically in \cite{Eysenbach2022} and empirically in \cite{Haarnoja2018a, Haarnoja2018b, Eysenbach2022}. Risk-neutral \gls{SAC} already incorporates entropy regularization. Its intensity can be controlled via the entropy coefficient hyperparameter $\alpha$. For our non-robust baseline algorithm, for which we focus on maximizing performance on the gradient~1 distribution, we found a scheduled $\alpha$ to be effective: we use a tuned $\alpha>0$ at the beginning of training for effective exploration, then set it to zero after a tuned number of iterations, and continue training with $\alpha=0$ until convergence, obtaining the final policy without entropy regularization. We use the same schedule for $\alpha$ in our risk-sensitive \gls{SAC} algorithm. When we use entropy regularization to improve the robustness of the risk-neutral algorithm, we keep the tuned $\alpha$-value at the beginning of training, but then reset it to a positive value for the remainder of training until convergence. We conduct experiments for a large range of different values for this final $\alpha$. 

\subsection{Performance evaluation} \label{sec:performance_evaluation}
We measure an algorithm's performance in our environment in terms of its percentage improvement over a greedy baseline algorithm, as greedy algorithms typically perform well in related \gls{CO} problems \cite{Enders2023}. Advanced algorithms typically outperform greedy algorithms by only a few percentage points. Still, the effort to develop such algorithms is meaningful, since they are typically deployed at a large scale in practice, such that the small percentage improvements translate into large absolute gains \cite{SadeghiEshkevari2022}. 

We implement the following greedy algorithm: if the agent has collected but not yet delivered an item, it moves to the target location on the shortest route. If the agent has no item on board, but there is an item available that can be reached before it disappears and will lead to a positive profit (accounting for revenue and cost for moving to the item and then to the target location), the agent moves towards the item's location on the shortest route. If there are multiple such items available, the agent moves towards the item that will lead to the highest profit. If there is no such item available, the agent does not move, i.e., it stays at the target location. When the agent has started to move towards an item and then a more profitable item appears, the agent changes direction and moves towards the more profitable item. 

This greedy algorithm is not tailored to a specific item distribution, i.e., it is robust against distribution shifts. Thus, when we report performance improvement over the greedy algorithm, we report performance improvement over a robust baseline, which makes the greedy algorithm particularly well suited for our purposes. 

When we report the performance of a \gls{DRL} algorithm under a distribution shift, i.e., the performance on test data from a distribution different than the training distribution, we report its performance gain over greedy relative to the performance gain of the policy obtained by training a risk-neutral \gls{SAC} agent on data from the respective shifted distribution. Thereby, we view the performance of a \gls{DRL} agent trained on the ``true'' distribution as an upper bound on the performance of a robust or non-robust \gls{DRL} agent trained on the ``wrong'' distribution. 

We use 1,000 episodes of sampled data for each of the item distributions, which we split into 800 training, 100 validation, and 100 testing episodes. The test performance is the non-discounted cumulative reward per episode, averaged over the 100 testing episodes. We repeat every training run with three different random seeds and select the model with the highest validation reward across seeds for testing. 

To reduce the impact of the random sampling process on the reported results for our benchmark based on manipulated training data, we repeat the training data manipulation process three times with three different random seeds. Then, we repeat every experiment for each of the three manipulated data sets. Finally, we report the test performance averaged over the three policies trained on the three different data sets.

\section{Results and discussion}

In the following, we briefly discuss the convergence behavior of our risk-sensitive algorithm. Then, we analyze the performance of our algorithm and the two benchmarks in detail. 

\subsection{Convergence analysis}

Figure~\ref{fig:convergence} shows the validation reward over the course of training for our risk-sensitive algorithm with $\beta=-1$ in comparison to risk-neutral \gls{SAC}. The training curves illustrate that for small absolute $\beta$-values, our risk-sensitive algorithm shows stable convergence behavior. For larger absolute values of $\beta$, our algorithm still exhibits stable convergence to a good policy for specific random seeds, but not across all tested random seeds. Moreover, for all tested values of $\beta$, our risk-sensitive algorithm requires a similar number of samples and a similar computational time as risk-neutral \gls{SAC}.

\begin{figure}[h]
    \centering
    \includegraphics[width=\linewidth]{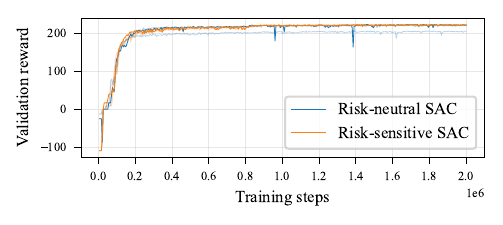}
    \caption{Convergence behavior of our risk-sensitive algorithm with $\beta=-1$ compared to risk-neutral \gls{SAC}. For each algorithm, we show the training curves for three different random seeds. The non-transparent lines correspond to the best-performing seed, the transparent ones to the other seeds.}
    \label{fig:convergence}
\end{figure}

\subsection{Performance analysis}

\begin{figure*}
    \centering
    \includegraphics[width=\linewidth]{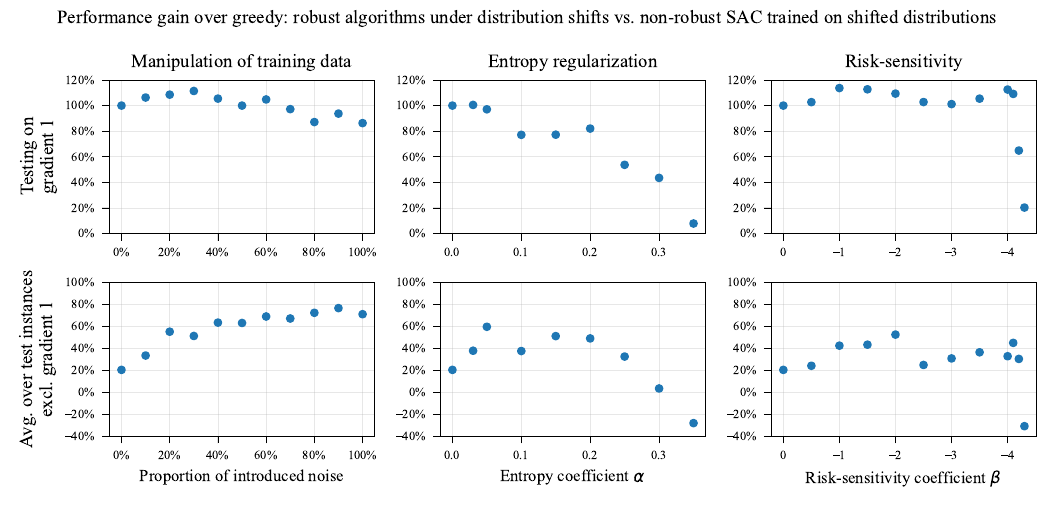}
    \caption{Performance of the three approaches to improve robustness. The top row shows their performance on the test data for the training distribution. The bottom row shows the average over all other distributions, i.e., the performance under distribution shifts. The left-most data point in each plot shows the results for the non-robust discrete \gls{SAC} algorithm. We report all results relative to \gls{SAC}'s performance when trained on the shifted, i.e., the true distribution as explained in Section~\ref{sec:performance_evaluation}.}
    \label{fig:results}
\end{figure*}

Figure~\ref{fig:results} shows the performance of the three approaches to improve robustness, both on the test data for the training distribution and under distribution shifts. As theoretically expected, the non-robust \gls{SAC} algorithm performs worse under distribution shifts than when it is trained on data from the true distribution. Specifically, the performance gain over greedy under distribution shifts is only 20\% of the achievable performance gain. This illustrates that our environment and the considered item distributions are suitable to experimentally investigate the robustness of \gls{DRL} algorithms under distribution shifts.

Our risk-sensitive algorithm improves the performance under distribution shifts compared to risk-neutral \gls{SAC}, as it reaches up to 53\% of the upper bound's performance improvement over greedy. Up to $\beta=-2$, an increasing absolute value of $\beta$ and therefore increasing risk-aversion improves the performance under distribution shifts. As the absolute value of $\beta$ becomes even larger, the risk-sensitive algorithm still outperforms the risk-neutral algorithm under distribution shifts, but performance decreases again and the performance pattern becomes slightly unstable. This instability is in line with the previously described worsening convergence behavior as the absolute value of $\beta$ increases. These empirical observations align with our theoretical results in Section~\ref{sec:methodology}, which assume that $\beta$ is close to zero. Finally, as the absolute value of $\beta$ becomes too large, performance collapses, as the agent becomes too risk-averse and the learning process too unstable to learn a good policy. 

Except for too large absolute $\beta$-values, the risk-averse policy consistently outperforms the risk-neutral policy on the training distribution, improving the risk-neutral policy's performance gain over greedy up to 114 percentage points. This result might be surprising at first sight, as one might expect a consistency-robustness tradeoff. However, it is in line with experimental results in the literature \cite{Ma2020, Noorani2021}, where risk-averse \gls{DRL} algorithms outperform their risk-neutral counterparts on the training environments. The authors of \cite{Noorani2021} explain this observation by a variance reduction due to the risk-averse objective, which in turn helps to converge to a better policy for the training environment. Based on Equation~\eqref{eq:taylor_expansion}, our risk-sensitive objective can be interpreted as the variance-regularized expected return, penalizing high variance for negative $\beta$-values. This explains our risk-averse algorithm's performance improvement on the gradient~1 distribution. 

The manipulation of the training data also improves the performance under distribution shifts, to about 60-80\% of the upper bound's performance improvement over greedy when we sample at least 40\% of item locations from the uniform distribution. Simultaneously, the performance on the gradient~1 distribution increases for small proportions of introduced noise, but decreases as this proportion becomes larger. Consequently, manipulating the training data leads to better robustness results than our risk-sensitive algorithm. Which approach is favorable depends on how one weighs consistency versus robustness, as our risk-sensitive algorithm achieves better results on the training distribution. Besides, the manipulation of the training data requires the ability to change the training environment deliberately, typically in a simulator, and the domain expertise how to manipulate the training data effectively for the respective problem setting. We can use our risk-sensitive algorithm as well as entropy regularization even when these requirements are not fulfilled, such that these are more generally applicable.

With entropy regularization, the performance under distribution shifts improves to 60\% of the upper bound's performance improvement over greedy as the entropy regularization coefficient $\alpha$ increases. For further increasing $\alpha$, the performance under distribution shifts decreases again because the over-regularization hurts performance. On the training distribution, the performance decreases as $\alpha$ increases. Consequently, entropy regularization leads to slightly better robustness than our risk-sensitive algorithm, at the price of lower performance on the training distribution. Note that we also combined our risk-sensitive algorithm with entropy regularization, but found that this does not further improve performance, see Appendix~\ref{app:combination}.

Figure~\ref{fig:tradeoff} compares the consistency-robustness tradeoff for our risk-sensitive algorithm and entropy regularization. Here, we consider the weighted average of the performance on gradient~1 and the performance under distribution shifts to evaluate these two approaches to improve the robustness of \gls{DRL} based on a single metric. When we assign at least 37\% weight to the performance on the training distribution, our risk-sensitive algorithm outperforms entropy regularization. Thus, assuming that good performance on the training distribution is not only a secondary objective with less than 37\% weight, our algorithm is superior to entropy regularization in our experiments. Nevertheless, the final decision which approach is best suited to ensure robustness depends on the needs of the considered problem setting. 

\begin{figure}
    \centering
    \includegraphics[width=\linewidth]{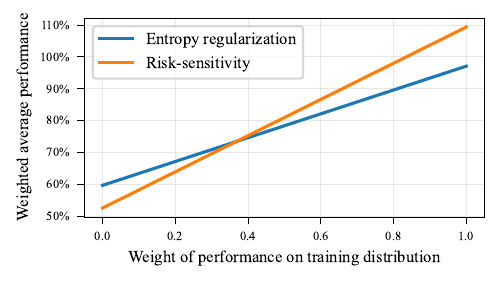}
    \caption{Consistency-robustness tradeoff for entropy regularization with $\alpha=0.05$ and our risk-sensitive algorithm with $\beta=-2$. Using the performance under distribution shifts vs. non-robust \gls{SAC} trained on the shifted distributions as the performance metric, we show the weighted average of the performance on the training distribution and the performance under distribution shifts.}
    \label{fig:tradeoff}
\end{figure}

\section{Conclusion}

We present discrete \gls{SAC} for the entropic risk measure, which is the first model-free risk-sensitive \gls{DRL} algorithm for discrete actions that builds on the state-of-the-art in risk-neutral \gls{DRL}. Specifically, we derive a version of the Bellman equation for $Q$-values for the entropic risk measure. We establish a corresponding policy improvement result and infer a practical off-policy algorithm that learns from single trajectories. Our algorithm allows to control its risk-sensitivity via a hyperparameter, and implementing our algorithm requires only a small modification relative to risk-neutral \gls{SAC}, such that it is easily applicable in practice. We conduct experiments within an environment representing typical contextual multi-stage stochastic \gls{CO} problems from the operations research domain. Thereby, we demonstrate that our risk-sensitive algorithm significantly improves robustness against distribution shifts compared to risk-neutral \gls{SAC}. Simultaneously, it improves the performance on the training distribution. We compare our risk-sensitive algorithm to (i) manipulation of the training data and (ii) entropy regularization, showing that our algorithm is superior to these benchmarks due to its more general applicability and a better consistency-robustness tradeoff. Our study is the first structured analysis of the robustness of \gls{RL} under distribution shifts in the realm of contextual multi-stage stochastic \gls{CO} problems. Moreover, we note that the presented algorithm is relevant beyond this application domain and may be used for any \gls{MDP} with discrete actions where risk-sensitivity or robustness against environment perturbations is relevant.

In future work, we will extend this research in the following directions: we will investigate the performance of our algorithm in a more complex, large-scale version of our environment with multiple agents. Furthermore, we will develop a version of our risk-sensitive \gls{SAC} algorithm for continuous actions. Finally, we will implement and test a risk-sensitive \gls{DQN} algorithm for the entropic risk measure based on the derived Bellman equation.

\newpage

\bibliographystyle{ACM-Reference-Format}
{\footnotesize \bibliography{references}}


\begin{thebibliography}{49}


\ifx \showCODEN    \undefined \def \showCODEN     #1{\unskip}     \fi
\ifx \showDOI      \undefined \def \showDOI       #1{#1}\fi
\ifx \showISBNx    \undefined \def \showISBNx     #1{\unskip}     \fi
\ifx \showISBNxiii \undefined \def \showISBNxiii  #1{\unskip}     \fi
\ifx \showISSN     \undefined \def \showISSN      #1{\unskip}     \fi
\ifx \showLCCN     \undefined \def \showLCCN      #1{\unskip}     \fi
\ifx \shownote     \undefined \def \shownote      #1{#1}          \fi
\ifx \showarticletitle \undefined \def \showarticletitle #1{#1}   \fi
\ifx \showURL      \undefined \def \showURL       {\relax}        \fi
\providecommand\bibfield[2]{#2}
\providecommand\bibinfo[2]{#2}
\providecommand\natexlab[1]{#1}
\providecommand\showeprint[2][]{arXiv:#2}

\bibitem[Artzner et~al\mbox{.}(1999)]%
        {Artzner1999}
\bibfield{author}{\bibinfo{person}{Philippe Artzner}, \bibinfo{person}{Freddy
  Delbaen}, \bibinfo{person}{Jean-Marc Eber}, {and} \bibinfo{person}{David
  Heath}.} \bibinfo{year}{1999}\natexlab{}.
\newblock \showarticletitle{Coherent Measures of Risk}.
\newblock \bibinfo{journal}{\emph{Mathematical Finance}} \bibinfo{volume}{9},
  \bibinfo{number}{3} (\bibinfo{year}{1999}), \bibinfo{pages}{203--228}.
\newblock


\bibitem[Chow and Ghavamzadeh(2014)]%
        {Chow2014}
\bibfield{author}{\bibinfo{person}{Yinlam Chow} {and} \bibinfo{person}{Mohammad
  Ghavamzadeh}.} \bibinfo{year}{2014}\natexlab{}.
\newblock \showarticletitle{Algorithms for {CVaR} Optimization in {MDPs}}. In
  \bibinfo{booktitle}{\emph{Advances in Neural Information Processing
  Systems}}.
\newblock


\bibitem[Chow et~al\mbox{.}(2015)]%
        {Chow2015}
\bibfield{author}{\bibinfo{person}{Yinlam Chow}, \bibinfo{person}{Aviv Tamar},
  \bibinfo{person}{Shie Mannor}, {and} \bibinfo{person}{Marco Pavone}.}
  \bibinfo{year}{2015}\natexlab{}.
\newblock \showarticletitle{Risk-Sensitive and Robust Decision-Making: a {CVaR}
  Optimization Approach}. In \bibinfo{booktitle}{\emph{Advances in Neural
  Information Processing Systems}}.
\newblock


\bibitem[Christodoulou(2019)]%
        {Christodoulou2019}
\bibfield{author}{\bibinfo{person}{Petros Christodoulou}.}
  \bibinfo{year}{2019}\natexlab{}.
\newblock \showarticletitle{Soft Actor-Critic for Discrete Action Settings}.
\newblock \bibinfo{journal}{\emph{arXiv preprint arXiv:1910.07207}}
  (\bibinfo{year}{2019}).
\newblock


\bibitem[De~Moor et~al\mbox{.}(2022)]%
        {DeMoor2022}
\bibfield{author}{\bibinfo{person}{Bram~J. De~Moor}, \bibinfo{person}{Joren
  Gijsbrechts}, {and} \bibinfo{person}{Robert~N. Boute}.}
  \bibinfo{year}{2022}\natexlab{}.
\newblock \showarticletitle{Reward shaping to improve the performance of deep
  reinforcement learning in perishable inventory management}.
\newblock \bibinfo{journal}{\emph{European Journal of Operational Research}}
  \bibinfo{volume}{301}, \bibinfo{number}{2} (\bibinfo{year}{2022}),
  \bibinfo{pages}{535--545}.
\newblock


\bibitem[Enders et~al\mbox{.}(2023)]%
        {Enders2023}
\bibfield{author}{\bibinfo{person}{Tobias Enders}, \bibinfo{person}{James
  Harrison}, \bibinfo{person}{Marco Pavone}, {and} \bibinfo{person}{Maximilian
  Schiffer}.} \bibinfo{year}{2023}\natexlab{}.
\newblock \showarticletitle{Hybrid Multi-agent Deep Reinforcement Learning for
  Autonomous Mobility on Demand Systems}. In
  \bibinfo{booktitle}{\emph{Proceedings of the 5th Conference on Learning for
  Dynamics and Control (L4DC)}}.
\newblock


\bibitem[Eysenbach and Levine(2022)]%
        {Eysenbach2022}
\bibfield{author}{\bibinfo{person}{Benjamin Eysenbach} {and}
  \bibinfo{person}{Sergey Levine}.} \bibinfo{year}{2022}\natexlab{}.
\newblock \showarticletitle{Maximum Entropy {RL} (Provably) Solves Some Robust
  {RL} Problems}.
\newblock \bibinfo{journal}{\emph{International Conference on Learning
  Representations (ICLR)}} (\bibinfo{year}{2022}).
\newblock


\bibitem[Fei et~al\mbox{.}(2021b)]%
        {Fei2021}
\bibfield{author}{\bibinfo{person}{Yingjie Fei}, \bibinfo{person}{Zhuoran
  Yang}, \bibinfo{person}{Yudong Chen}, {and} \bibinfo{person}{Zhaoran Wang}.}
  \bibinfo{year}{2021}\natexlab{b}.
\newblock \showarticletitle{Exponential Bellman Equation and Improved Regret
  Bounds for Risk-Sensitive Reinforcement Learning}. In
  \bibinfo{booktitle}{\emph{Advances in Neural Information Processing
  Systems}}.
\newblock


\bibitem[Fei et~al\mbox{.}(2020)]%
        {Fei2020}
\bibfield{author}{\bibinfo{person}{Yingjie Fei}, \bibinfo{person}{Zhuoran
  Yang}, \bibinfo{person}{Yudong Chen}, \bibinfo{person}{Zhaoran Wang}, {and}
  \bibinfo{person}{Qiaomin Xie}.} \bibinfo{year}{2020}\natexlab{}.
\newblock \showarticletitle{Risk-Sensitive Reinforcement Learning: Near-Optimal
  Risk-Sample Tradeoff in Regret}. In \bibinfo{booktitle}{\emph{Advances in
  Neural Information Processing Systems}}.
\newblock


\bibitem[Fei et~al\mbox{.}(2021a)]%
        {Fei2021a}
\bibfield{author}{\bibinfo{person}{Yingjie Fei}, \bibinfo{person}{Zhuoran
  Yang}, {and} \bibinfo{person}{Zhaoran Wang}.}
  \bibinfo{year}{2021}\natexlab{a}.
\newblock \showarticletitle{Risk-Sensitive Reinforcement Learning with Function
  Approximation: A Debiasing Approach}. In
  \bibinfo{booktitle}{\emph{Proceedings of the 38th International Conference on
  Machine Learning}}.
\newblock


\bibitem[Gammelli et~al\mbox{.}(2021)]%
        {Gammelli2021}
\bibfield{author}{\bibinfo{person}{Daniele Gammelli}, \bibinfo{person}{Kaidi
  Yang}, \bibinfo{person}{James Harrison}, \bibinfo{person}{Filipe Rodrigues},
  \bibinfo{person}{Francisco~C. Pereira}, {and} \bibinfo{person}{Marco
  Pavone}.} \bibinfo{year}{2021}\natexlab{}.
\newblock \showarticletitle{Graph Neural Network Reinforcement Learning for
  Autonomous Mobility-on-Demand Systems}. In
  \bibinfo{booktitle}{\emph{Proceedings of the 60th IEEE Conference on Decision
  and Control (CDC)}}.
\newblock


\bibitem[Gijsbrechts et~al\mbox{.}(2022)]%
        {Gijsbrechts2022}
\bibfield{author}{\bibinfo{person}{Joren Gijsbrechts},
  \bibinfo{person}{Robert~N. Boute}, \bibinfo{person}{Jan~A. Van~Mieghem},
  {and} \bibinfo{person}{Dennis~J. Zhang}.} \bibinfo{year}{2022}\natexlab{}.
\newblock \showarticletitle{Can Deep Reinforcement Learning Improve Inventory
  Management? Performance on Dual Sourcing, Lost Sales and Multi-Echelon
  Problems}.
\newblock \bibinfo{journal}{\emph{Manufacturing \& Service Operations
  Management}} \bibinfo{volume}{24}, \bibinfo{number}{3}
  (\bibinfo{year}{2022}), \bibinfo{pages}{1261--1885}.
\newblock


\bibitem[Goodfellow et~al\mbox{.}(2015)]%
        {Goodfellow2015}
\bibfield{author}{\bibinfo{person}{Ian~J. Goodfellow},
  \bibinfo{person}{Jonathon Shlens}, {and} \bibinfo{person}{Christian
  Szegedy}.} \bibinfo{year}{2015}\natexlab{}.
\newblock \showarticletitle{Explaining and Harnessing Adversarial Examples}. In
  \bibinfo{booktitle}{\emph{International Conference on Learning
  Representations (ICLR)}}.
\newblock


\bibitem[Haarnoja et~al\mbox{.}(2018a)]%
        {Haarnoja2018a}
\bibfield{author}{\bibinfo{person}{Tuomas Haarnoja}, \bibinfo{person}{Sehoon
  Ha}, \bibinfo{person}{Aurick Zhou}, \bibinfo{person}{Jie Tan},
  \bibinfo{person}{George Tucker}, {and} \bibinfo{person}{Sergey Levine}.}
  \bibinfo{year}{2018}\natexlab{a}.
\newblock \showarticletitle{Learning to Walk via Deep Reinforcement Learning}.
\newblock \bibinfo{journal}{\emph{arXiv preprint arXiv:1812.11103}}
  (\bibinfo{year}{2018}).
\newblock


\bibitem[Haarnoja et~al\mbox{.}(2018b)]%
        {Haarnoja2018b}
\bibfield{author}{\bibinfo{person}{Tuomas Haarnoja}, \bibinfo{person}{Vitchyr
  Pong}, \bibinfo{person}{Aurick Zhou}, \bibinfo{person}{Murtaza Dalal},
  \bibinfo{person}{Pieter Abbeel}, {and} \bibinfo{person}{Sergey Levine}.}
  \bibinfo{year}{2018}\natexlab{b}.
\newblock \showarticletitle{Composable Deep Reinforcement Learning for Robotic
  Manipulation}. In \bibinfo{booktitle}{\emph{2018 International Conference on
  Robotics and Automation (ICRA)}}.
\newblock


\bibitem[Haarnoja et~al\mbox{.}(2018c)]%
        {Haarnoja2018}
\bibfield{author}{\bibinfo{person}{Tuomas Haarnoja}, \bibinfo{person}{Aurick
  Zhou}, \bibinfo{person}{Pieter Abbeel}, {and} \bibinfo{person}{Sergey
  Levine}.} \bibinfo{year}{2018}\natexlab{c}.
\newblock \showarticletitle{Soft Actor-Critic: Off-Policy Maximum Entropy Deep
  Reinforcement Learning with a Stochastic Actor}. In
  \bibinfo{booktitle}{\emph{Proceedings of the 35th International Conference on
  Machine Learning}}.
\newblock


\bibitem[Haarnoja et~al\mbox{.}(2018d)]%
        {Haarnoja2018c}
\bibfield{author}{\bibinfo{person}{Tuomas Haarnoja}, \bibinfo{person}{Aurick
  Zhou}, \bibinfo{person}{Kristian Hartikainen}, \bibinfo{person}{George
  Tucker}, \bibinfo{person}{Sehoon Ha}, \bibinfo{person}{Jie Tan},
  \bibinfo{person}{Vikash Kumar}, \bibinfo{person}{Henry Zhu},
  \bibinfo{person}{Abhishek Gupta}, \bibinfo{person}{Pieter Abbeel}, {and}
  \bibinfo{person}{Sergey Levine}.} \bibinfo{year}{2018}\natexlab{d}.
\newblock \showarticletitle{Soft Actor-Critic Algorithms and Applications}.
\newblock \bibinfo{journal}{\emph{arXiv preprint arXiv:1812.05905}}
  (\bibinfo{year}{2018}).
\newblock


\bibitem[Howard and Matheson(1972)]%
        {Howard1972}
\bibfield{author}{\bibinfo{person}{Ronald~A. Howard} {and}
  \bibinfo{person}{James~E. Matheson}.} \bibinfo{year}{1972}\natexlab{}.
\newblock \showarticletitle{Risk-Sensitive Markov Decision Processes}.
\newblock \bibinfo{journal}{\emph{Management Science}} \bibinfo{volume}{18},
  \bibinfo{number}{7} (\bibinfo{year}{1972}), \bibinfo{pages}{349--463}.
\newblock


\bibitem[Hu et~al\mbox{.}(2023)]%
        {Hu2023}
\bibfield{author}{\bibinfo{person}{Yifan Hu}, \bibinfo{person}{Junjie Fu},
  {and} \bibinfo{person}{Guanghui Wen}.} \bibinfo{year}{2023}\natexlab{}.
\newblock \showarticletitle{Graph Soft Actor–Critic Reinforcement Learning
  for Large-Scale Distributed Multirobot Coordination}.
\newblock \bibinfo{journal}{\emph{IEEE Transactions on Neural Networks and
  Learning Systems}} (\bibinfo{year}{2023}).
\newblock


\bibitem[Iqbal and Sha(2019)]%
        {Iqbal2019}
\bibfield{author}{\bibinfo{person}{Shariq Iqbal} {and} \bibinfo{person}{Fei
  Sha}.} \bibinfo{year}{2019}\natexlab{}.
\newblock \showarticletitle{Actor-Attention-Critic for Multi-Agent
  Reinforcement Learning}. In \bibinfo{booktitle}{\emph{Proceedings of the 36th
  International Conference on Machine Learning}}.
\newblock


\bibitem[Iyengar(2005)]%
        {Iyengar2005}
\bibfield{author}{\bibinfo{person}{Garud~N. Iyengar}.}
  \bibinfo{year}{2005}\natexlab{}.
\newblock \showarticletitle{Robust Dynamic Programming}.
\newblock \bibinfo{journal}{\emph{Mathematics of Operations Research}}
  \bibinfo{volume}{30}, \bibinfo{number}{2} (\bibinfo{year}{2005}),
  \bibinfo{pages}{257--544}.
\newblock


\bibitem[Jacobson(1973)]%
        {Jacobson1973}
\bibfield{author}{\bibinfo{person}{David~H. Jacobson}.}
  \bibinfo{year}{1973}\natexlab{}.
\newblock \showarticletitle{Optimal Stochastic Linear Systems with Exponential
  Performance Criteria and Their Relation to Deterministic Differential Games}.
\newblock \bibinfo{journal}{\emph{IEEE Trans. Automat. Control}}
  \bibinfo{volume}{18}, \bibinfo{number}{2} (\bibinfo{year}{1973}),
  \bibinfo{pages}{124--131}.
\newblock


\bibitem[Jacquette(1976)]%
        {Jacquette1976}
\bibfield{author}{\bibinfo{person}{Stratton~C. Jacquette}.}
  \bibinfo{year}{1976}\natexlab{}.
\newblock \showarticletitle{A Utility Criterion for Markov Decision Processes}.
\newblock \bibinfo{journal}{\emph{Management Science}} \bibinfo{volume}{23},
  \bibinfo{number}{1} (\bibinfo{year}{1976}), \bibinfo{pages}{43--49}.
\newblock


\bibitem[Ma et~al\mbox{.}(2020)]%
        {Ma2020}
\bibfield{author}{\bibinfo{person}{Xiaoteng Ma}, \bibinfo{person}{Li Xia},
  \bibinfo{person}{Zhengyuan Zhou}, \bibinfo{person}{Jun Yang}, {and}
  \bibinfo{person}{Qianchuan Zhao}.} \bibinfo{year}{2020}\natexlab{}.
\newblock \showarticletitle{{DSAC}: Distributional Soft Actor Critic for
  Risk-Sensitive Reinforcement Learning}.
\newblock \bibinfo{journal}{\emph{arXiv preprint arXiv:2004.14547}}
  (\bibinfo{year}{2020}).
\newblock


\bibitem[Moos et~al\mbox{.}(2022)]%
        {Moos2022}
\bibfield{author}{\bibinfo{person}{Janosch Moos}, \bibinfo{person}{Kay Hansel},
  \bibinfo{person}{Hany Abdulsamad}, \bibinfo{person}{Svenja Stark},
  \bibinfo{person}{Debora Clever}, {and} \bibinfo{person}{Jan Peters}.}
  \bibinfo{year}{2022}\natexlab{}.
\newblock \showarticletitle{Robust Reinforcement Learning: A Review of
  Foundations and Recent Advances}.
\newblock \bibinfo{journal}{\emph{Machine Learning and Knowledge Extraction}}
  \bibinfo{volume}{4} (\bibinfo{year}{2022}), \bibinfo{pages}{276--315}.
\newblock


\bibitem[Nass et~al\mbox{.}(2019)]%
        {Nass2019}
\bibfield{author}{\bibinfo{person}{David Nass}, \bibinfo{person}{Boris
  Belousov}, {and} \bibinfo{person}{Jan Peters}.}
  \bibinfo{year}{2019}\natexlab{}.
\newblock \showarticletitle{Entropic Risk Measure in Policy Search}. In
  \bibinfo{booktitle}{\emph{2019 IEEE/RSJ International Conference on
  Intelligent Robots and Systems (IROS)}}.
\newblock


\bibitem[Nesterov(2005)]%
        {Nesterov2005}
\bibfield{author}{\bibinfo{person}{Yu Nesterov}.}
  \bibinfo{year}{2005}\natexlab{}.
\newblock \showarticletitle{Smooth minimization of non-smooth functions}.
\newblock \bibinfo{journal}{\emph{Mathematical Programming}}
  \bibinfo{volume}{103} (\bibinfo{year}{2005}), \bibinfo{pages}{127--152}.
\newblock


\bibitem[Noorani and Baras(2021)]%
        {Noorani2021}
\bibfield{author}{\bibinfo{person}{Erfaun Noorani} {and}
  \bibinfo{person}{John~S. Baras}.} \bibinfo{year}{2021}\natexlab{}.
\newblock \showarticletitle{Risk-sensitive {REINFORCE}: A Monte Carlo Policy
  Gradient Algorithm for Exponential Performance Criteria}. In
  \bibinfo{booktitle}{\emph{Proceedings of the 60th IEEE Conference on Decision
  and Control (CDC)}}.
\newblock


\bibitem[Noorani et~al\mbox{.}(2022)]%
        {Noorani2022}
\bibfield{author}{\bibinfo{person}{Erfaun Noorani}, \bibinfo{person}{Christos
  Mavridis}, {and} \bibinfo{person}{John Baras}.}
  \bibinfo{year}{2022}\natexlab{}.
\newblock \showarticletitle{Risk-Sensitive Reinforcement Learning with
  Exponential Criteria}.
\newblock \bibinfo{journal}{\emph{arXiv preprint arXiv:2212.09010}}
  (\bibinfo{year}{2022}).
\newblock


\bibitem[Osogami(2012)]%
        {Osogami2012}
\bibfield{author}{\bibinfo{person}{Takayuki Osogami}.}
  \bibinfo{year}{2012}\natexlab{}.
\newblock \showarticletitle{Robustness and risk-sensitivity in Markov decision
  processes}. In \bibinfo{booktitle}{\emph{Advances in Neural Information
  Processing Systems}}.
\newblock


\bibitem[Patek(2001)]%
        {Patek2001}
\bibfield{author}{\bibinfo{person}{Stephen~D. Patek}.}
  \bibinfo{year}{2001}\natexlab{}.
\newblock \showarticletitle{On terminating Markov decision processes with a
  risk-averse objective function}.
\newblock \bibinfo{journal}{\emph{Automatica}} \bibinfo{volume}{37},
  \bibinfo{number}{9} (\bibinfo{year}{2001}), \bibinfo{pages}{1379--1386}.
\newblock


\bibitem[Prashanth(2014)]%
        {Prashanth2014}
\bibfield{author}{\bibinfo{person}{L.~A. Prashanth}.}
  \bibinfo{year}{2014}\natexlab{}.
\newblock \showarticletitle{Policy Gradients for {CVaR}-Constrained {MDPs}}. In
  \bibinfo{booktitle}{\emph{International Conference on Algorithmic Learning
  Theory}}.
\newblock


\bibitem[Prashanth and Ghavamzadeh(2013)]%
        {Prashanth2013}
\bibfield{author}{\bibinfo{person}{L.~A. Prashanth} {and}
  \bibinfo{person}{Mohammad Ghavamzadeh}.} \bibinfo{year}{2013}\natexlab{}.
\newblock \showarticletitle{Actor-Critic Algorithms for Risk-Sensitive {MDPs}}.
  In \bibinfo{booktitle}{\emph{Advances in Neural Information Processing
  Systems}}.
\newblock


\bibitem[Prashanth and Ghavamzadeh(2016)]%
        {Prashanth2016}
\bibfield{author}{\bibinfo{person}{L.~A. Prashanth} {and}
  \bibinfo{person}{Mohammad Ghavamzadeh}.} \bibinfo{year}{2016}\natexlab{}.
\newblock \showarticletitle{Variance-constrained actor-critic algorithms for
  discounted and average reward {MDPs}}.
\newblock \bibinfo{journal}{\emph{Machine Learning}} (\bibinfo{year}{2016}).
\newblock


\bibitem[Sadeghi~Eshkevari et~al\mbox{.}(2022)]%
        {SadeghiEshkevari2022}
\bibfield{author}{\bibinfo{person}{Soheil Sadeghi~Eshkevari},
  \bibinfo{person}{Xiaocheng Tang}, \bibinfo{person}{Zhiwei Qin},
  \bibinfo{person}{Jinhan Mei}, \bibinfo{person}{Cheng Zhang},
  \bibinfo{person}{Qianying Meng}, {and} \bibinfo{person}{Jia Xu}.}
  \bibinfo{year}{2022}\natexlab{}.
\newblock \showarticletitle{Reinforcement Learning in the Wild: Scalable {RL}
  Dispatching Algorithm Deployed in Ridehailing Marketplace}. In
  \bibinfo{booktitle}{\emph{Proceedings of the 28th ACM SIGKDD Conference on
  Knowledge Discovery and Data Mining}}.
\newblock


\bibitem[Singh et~al\mbox{.}(2020)]%
        {Singh2020}
\bibfield{author}{\bibinfo{person}{Rahul Singh}, \bibinfo{person}{Qinsheng
  Zhang}, {and} \bibinfo{person}{Yongxin Chen}.}
  \bibinfo{year}{2020}\natexlab{}.
\newblock \showarticletitle{Improving Robustness via Risk Averse Distributional
  Reinforcement Learning}. In \bibinfo{booktitle}{\emph{Proceedings of the 2nd
  Conference on Learning for Dynamics and Control (L4DC)}}.
\newblock


\bibitem[Sinha et~al\mbox{.}(2018)]%
        {Sinha2018}
\bibfield{author}{\bibinfo{person}{Aman Sinha}, \bibinfo{person}{Hongseok
  Namkoong}, \bibinfo{person}{Riccardo Volpi}, {and} \bibinfo{person}{John
  Duchi}.} \bibinfo{year}{2018}\natexlab{}.
\newblock \showarticletitle{Certifying Some Distributional Robustness with
  Principled Adversarial Training}. In \bibinfo{booktitle}{\emph{International
  Conference on Learning Representations (ICLR)}}.
\newblock


\bibitem[Sun et~al\mbox{.}(2022)]%
        {Sun2022}
\bibfield{author}{\bibinfo{person}{Wenjing Sun}, \bibinfo{person}{Yuan Zou},
  \bibinfo{person}{Xudong Zhang}, \bibinfo{person}{Ningyuan Guo},
  \bibinfo{person}{Bin Zhang}, {and} \bibinfo{person}{Guodong Du}.}
  \bibinfo{year}{2022}\natexlab{}.
\newblock \showarticletitle{High robustness energy management strategy of
  hybrid electric vehicle based on improved soft actor-critic deep
  reinforcement learning}.
\newblock \bibinfo{journal}{\emph{Energy}}  \bibinfo{volume}{258}
  (\bibinfo{year}{2022}), \bibinfo{pages}{124806}.
\newblock


\bibitem[Tamar et~al\mbox{.}(2015)]%
        {Tamar2015}
\bibfield{author}{\bibinfo{person}{Aviv Tamar}, \bibinfo{person}{Yinlam Chow},
  \bibinfo{person}{Mohammad Ghavamzadeh}, {and} \bibinfo{person}{Shie Mannor}.}
  \bibinfo{year}{2015}\natexlab{}.
\newblock \showarticletitle{Policy Gradient for Coherent Risk Measures}. In
  \bibinfo{booktitle}{\emph{Advances in Neural Information Processing
  Systems}}.
\newblock


\bibitem[Tamar et~al\mbox{.}(2012)]%
        {Tamar2012}
\bibfield{author}{\bibinfo{person}{Aviv Tamar}, \bibinfo{person}{Dotan
  Di~Castro}, {and} \bibinfo{person}{Shie Mannor}.}
  \bibinfo{year}{2012}\natexlab{}.
\newblock \showarticletitle{Policy Gradients with Variance Related Risk
  Criteria}. In \bibinfo{booktitle}{\emph{Proceedings of the 29th International
  Conference on Machine Learning}}.
\newblock


\bibitem[Tang et~al\mbox{.}(2019)]%
        {Tang2019}
\bibfield{author}{\bibinfo{person}{Xiaocheng Tang}, \bibinfo{person}{Zhiwei
  Qin}, \bibinfo{person}{Fan Zhang}, \bibinfo{person}{Zhaodong Wang},
  \bibinfo{person}{Zhe Xu}, \bibinfo{person}{Yintai Ma},
  \bibinfo{person}{Hongtu Zhu}, {and} \bibinfo{person}{Jieping Ye}.}
  \bibinfo{year}{2019}\natexlab{}.
\newblock \showarticletitle{A Deep Value-Network Based Approach for
  Multi-Driver Order Dispatching}. In \bibinfo{booktitle}{\emph{Proceedings of
  the 25th ACM SIGKDD International Conference on Knowledge Discovery and Data
  Mining}}.
\newblock


\bibitem[Tram{\`e}r et~al\mbox{.}(2018)]%
        {Tramer2018}
\bibfield{author}{\bibinfo{person}{Florian Tram{\`e}r},
  \bibinfo{person}{Kurakin Alexey}, \bibinfo{person}{Nicolas Papernot},
  \bibinfo{person}{Ian Goodfellow}, \bibinfo{person}{Dan Boneh}, {and}
  \bibinfo{person}{Patrick McDaniel}.} \bibinfo{year}{2018}\natexlab{}.
\newblock \showarticletitle{Ensemble Aversarial Training: Attacks and
  Defenses}. In \bibinfo{booktitle}{\emph{International Conference on Learning
  Representations (ICLR)}}.
\newblock


\bibitem[Urp{\'i} et~al\mbox{.}(2021)]%
        {Urpi2021}
\bibfield{author}{\bibinfo{person}{N{\'u}ria~Armengol Urp{\'i}},
  \bibinfo{person}{Sebastian Curi}, {and} \bibinfo{person}{Andreas Krause}.}
  \bibinfo{year}{2021}\natexlab{}.
\newblock \showarticletitle{Risk-Averse Offline Reinforcement Learning}. In
  \bibinfo{booktitle}{\emph{International Conference on Learning
  Representations (ICLR)}}.
\newblock


\bibitem[Vanvuchelen et~al\mbox{.}(2020)]%
        {Vanvuchelen2020}
\bibfield{author}{\bibinfo{person}{Nathalie Vanvuchelen},
  \bibinfo{person}{Joren Gijsbrechts}, {and} \bibinfo{person}{Robert Boute}.}
  \bibinfo{year}{2020}\natexlab{}.
\newblock \showarticletitle{Use of Proximal Policy Optimization for the Joint
  Replenishment Problem}.
\newblock \bibinfo{journal}{\emph{Computers in Industry}}
  \bibinfo{volume}{119} (\bibinfo{year}{2020}), \bibinfo{pages}{103239}.
\newblock


\bibitem[Whittle(1981)]%
        {Whittle1981}
\bibfield{author}{\bibinfo{person}{Peter Whittle}.}
  \bibinfo{year}{1981}\natexlab{}.
\newblock \showarticletitle{Risk-Sensitive Linear/Quadratic/{G}aussian
  Control}.
\newblock \bibinfo{journal}{\emph{Advances in Applied Probability}}
  \bibinfo{volume}{13}, \bibinfo{number}{4} (\bibinfo{year}{1981}),
  \bibinfo{pages}{764--777}.
\newblock


\bibitem[Wong et~al\mbox{.}(2021)]%
        {Wong2021}
\bibfield{author}{\bibinfo{person}{Ching-Chang Wong}, \bibinfo{person}{Shao-Yu
  Chien}, \bibinfo{person}{Hsuan-Ming Feng}, {and} \bibinfo{person}{Hisasuki
  Aoyama}.} \bibinfo{year}{2021}\natexlab{}.
\newblock \showarticletitle{Motion Planning for Dual-Arm Robot Based on Soft
  Actor-Critic}.
\newblock \bibinfo{journal}{\emph{IEEE Access}}  \bibinfo{volume}{9}
  (\bibinfo{year}{2021}), \bibinfo{pages}{26871--26885}.
\newblock


\bibitem[Yang et~al\mbox{.}(2023)]%
        {Yang2023}
\bibfield{author}{\bibinfo{person}{Jiachen Yang}, \bibinfo{person}{Jipeng
  Zhang}, \bibinfo{person}{Meng Xi}, \bibinfo{person}{Yutian Lei}, {and}
  \bibinfo{person}{Yiwen Sun}.} \bibinfo{year}{2023}\natexlab{}.
\newblock \showarticletitle{A Deep Reinforcement Learning Algorithm Suitable
  for Autonomous Vehicles: Double Bootstrapped Soft-Actor-Critic-Discrete}.
\newblock \bibinfo{journal}{\emph{IEEE Transactions on Cognitive and
  Developmental Systems}} \bibinfo{volume}{15}, \bibinfo{number}{4}
  (\bibinfo{year}{2023}), \bibinfo{pages}{2041--2052}.
\newblock


\bibitem[Yang et~al\mbox{.}(2021)]%
        {Yang2021}
\bibfield{author}{\bibinfo{person}{Qisong Yang}, \bibinfo{person}{Thiago~D.
  Sim{\~a}o}, \bibinfo{person}{Simon~H. Tinemans}, {and}
  \bibinfo{person}{Matthijs T.~J. Spaan}.} \bibinfo{year}{2021}\natexlab{}.
\newblock \showarticletitle{{WCSAC}: Worst-Case Soft Actor Critic for
  Safety-Constrained Reinforcement Learning}. In
  \bibinfo{booktitle}{\emph{Proceedings of the AAAI Conference on Artificial
  Intelligence}}.
\newblock


\bibitem[Zhang et~al\mbox{.}(2021)]%
        {Zhang2021}
\bibfield{author}{\bibinfo{person}{Shangtong Zhang}, \bibinfo{person}{Bo Liu},
  {and} \bibinfo{person}{Shimon Whiteson}.} \bibinfo{year}{2021}\natexlab{}.
\newblock \showarticletitle{Mean-Variance Policy Iteration for Risk-Averse
  Reinforcement Learning}. In \bibinfo{booktitle}{\emph{Proceedings of the AAAI
  Conference on Artificial Intelligence}}.
\newblock


\end{thebibliography}

\newpage

\appendix

\section{Details on derivations in Section~\ref{sec:methodology}}

\subsection{Approximations} \label{app:approximations}
Approximation (i): for $\beta$ close to zero and a real-valued random variable $X$, we get
\begin{equation*}
    \mathbb E\left[e^{\beta X}\right] = \exp\left(\beta\, \mathbb E\left[X\right] + \frac{\beta^2}{2}\mathrm{Var}\left[X\right] + \mathcal O\left(\beta^3\right)\right) \approx e^{\beta\, \mathbb E\left[X\right]}.
\end{equation*}
Approximation (ii): for a function $f$, it holds that
\begin{equation*}
    \mathbb E\left[f(X)\right] \approx f\left(\mathbb E[X]\right) + \frac12 f''\left(\mathbb E[X]\right)\cdot\mathrm{Var}[X] .
\end{equation*}
With $f(x)=x^\gamma$, $f'(x)=\gamma x^{\gamma-1}$, $f''(x)=\gamma(\gamma-1)x^{\gamma-2}$, we get
\begin{equation*}
    \mathbb E\left[X^\gamma\right] \approx \left(\mathbb E[X]\right)^\gamma + \frac12 \gamma(\underbrace{\gamma-1}_{\approx 0})\left(\mathbb E[X]\right)^{\gamma-2}\mathrm{Var}[X] \approx \left(\mathbb E[X]\right)^\gamma
\end{equation*}
for $\gamma$ close to one.

\subsection{Policy improvement proof} \label{app:proof_policy_improvement}
The proof of Proposition~\ref{prop:policy_improvement} follows the same arguments as the policy improvement proof in \cite{Haarnoja2018}. Since we can always choose $\pi_\text{new} = \pi_\text{old}$, it holds that
\begin{equation*}
    J_{\pi_\text{old}}\left(\pi_\text{new}\left(\cdot\lvert s_t\right)\right) \leq J_{\pi_\text{old}}\left(\pi_\text{old}\left(\cdot\lvert s_t\right)\right) .
\end{equation*}
Consequently,
\begin{align*}
    &\mathbb E_{a_t\sim\pi_\text{new}} \left[\log\pi_\text{new}\left(a_t\lvert s_t\right) - \frac{1}{\alpha}Q^{\pi_\text{old}}_t+\log Z^{\pi_\text{old}}\left(s_t\right)\right] \nonumber \\
    &\leq \mathbb E_{a_t\sim\pi_\text{old}} \left[\log\pi_\text{old}\left(a_t\lvert s_t\right) - \frac{1}{\alpha}Q^{\pi_\text{old}}_t+\log Z^{\pi_\text{old}}\left(s_t\right)\right] .
\end{align*}
Since $Z^{\pi_\text{old}}$ depends only on the state, not the action, this yields
\begin{align*}
    &\mathbb E_{a_t\sim\pi_\text{new}} \left[Q^{\pi_\text{old}}_t-\alpha\log\pi_\text{new}\left(a_t\lvert s_t\right)\right] \nonumber \\
    &\geq \mathbb E_{a_t\sim\pi_\text{old}} \left[Q^{\pi_\text{old}}_t-\alpha\log\pi_\text{old}\left(a_t\lvert s_t\right)\right] .
\end{align*}
We assume $\beta<0$ in the following and note later that the case $\beta>0$ works similarly. For $\beta<0$, we get
\begin{align*}
    & \beta\cdot \mathbb E_{a_t\sim\pi_\text{new}} \left[Q^{\pi_\text{old}}_t-\alpha\log\pi_\text{new}\left(a_t\lvert s_t\right)\right] \nonumber \\
    & \leq \beta\cdot \mathbb E_{a_t\sim\pi_\text{old}} \left[Q^{\pi_\text{old}}_t-\alpha\log\pi_\text{old}\left(a_t\lvert s_t\right)\right] \nonumber \\[0.3\baselineskip]
    \iff & \beta\cdot\left(\alpha\mathcal H^{\pi_\text{new}}_t + \mathbb E_{a_t\sim\pi_\text{new}} \left[Q^{\pi_\text{old}}_t\right]\right) \nonumber \\
    & \leq \beta\cdot\left(\alpha\mathcal H^{\pi_\text{old}}_t + \mathbb E_{a_t\sim\pi_\text{old}} \left[Q^{\pi_\text{old}}_t\right]\right) \nonumber \\[0.3\baselineskip]
    \iff & \exp\left(\beta\alpha\mathcal H^{\pi_\text{new}}_t + \beta\cdot\mathbb E_{a_t\sim\pi_\text{new}} \left[Q^{\pi_\text{old}}_t\right]\right) \nonumber \\
    & \leq \exp\left(\beta\alpha\mathcal H^{\pi_\text{old}}_t + \beta\cdot\mathbb E_{a_t\sim\pi_\text{old}} \left[Q^{\pi_\text{old}}_t\right]\right) \nonumber \\[0.3\baselineskip]
    \iff & e^{\beta\alpha\mathcal H^{\pi_\text{new}}_t}\cdot\mathbb E_{a_t\sim\pi_\text{new}} \left[e^{\beta Q^{\pi_\text{old}}_t}\right] \nonumber \\
    & \leq e^{\beta\alpha\mathcal H^{\pi_\text{old}}_t}\cdot\mathbb E_{a_t\sim\pi_\text{old}} \left[e^{\beta Q^{\pi_\text{old}}_t}\right] \nonumber \\[0.3\baselineskip]
    \iff & e^{\beta\alpha\mathcal H^{\pi_\text{new}}_t}\cdot\mathbb E_{a_t\sim\pi_\text{new}} \left[\overline Q^{\pi_\text{old}}_t\right] \nonumber \\
    & \leq e^{\beta\alpha\mathcal H^{\pi_\text{old}}_t}\cdot\mathbb E_{a_t\sim\pi_\text{old}} \left[\overline Q^{\pi_\text{old}}_t\right],
\end{align*}
where we used Approximation (i). Repeated application of the Bellman equation derived before and this inequality yields
\footnotesize
\begin{align*}
    \overline Q^{\pi_\text{old}}_t &= e^{\beta r_t}\cdot \mathbb E_{s_{t+1}\sim p} \left[\mathbb E_{a_{t+1}\sim\pi_\text{old}}\left[ \overline Q^{\pi_\text{old}}_{t+1}\right] \cdot e^{\beta\alpha\mathcal H^{\pi_\text{old}}_{t+1}} \right]^\gamma \nonumber \\
    & \geq e^{\beta r_t}\cdot \mathbb E_{s_{t+1}\sim p} \left[\mathbb E_{a_{t+1}\sim\pi_\text{new}}\left[ \overline Q^{\pi_\text{old}}_{t+1}\right] \cdot e^{\beta\alpha\mathcal H^{\pi_\text{new}}_{t+1}} \right]^\gamma \nonumber \\
    & \;\;\vdots \nonumber \\
    &= e^{\beta r_t}\cdot \mathbb E_{\left(s_{t+1},...\right)\sim\rho_{\pi_\text{new}}} \left[ \exp\left( \beta\cdot\sum_{l=1}^\infty \gamma^l \left( r_{t+l} + \alpha\mathcal H^{\pi_\text{new}}_{t+l} \right) \right) \right] \nonumber \\
    &= \overline Q^{\pi_\text{new}}_t .
\end{align*}
\normalsize
Thus, we obtain
\begin{equation*}
    Q^{\pi_\text{old}}_t = \frac{1}{\beta}\log\overline Q^{\pi_\text{old}}_t \leq \frac{1}{\beta}\log\overline Q^{\pi_\text{new}}_t = Q^{\pi_\text{new}}_t,
\end{equation*}
which proves the policy improvement for $\beta<0$. The proof for $\beta>0$ works analogously (twice, the $\leq$ or $\geq$-sign does not turn).

\section{Results when learning $\overline Q$} \label{app:results_Qbar}

Figure~\ref{fig:Qbar} depicts the convergence behavior of the risk-sensitive algorithm that is based on learning $\overline Q$ as opposed to $Q$. For negative values of $\beta$, the plot looks similar to the one for $\beta=0.01$, i.e., the algorithm converges to zero rewards (we do not include the results for $\beta<0$ since they overlap too much with the results for $\beta=0.01$). While the training curve is reasonable for $\beta=0.1$, we only obtain a ``do not move'' policy with zero rewards as $\beta\rightarrow0$ and for $\beta<0$. 

\begin{figure}[h]
    \centering
    \includegraphics[width=\linewidth]{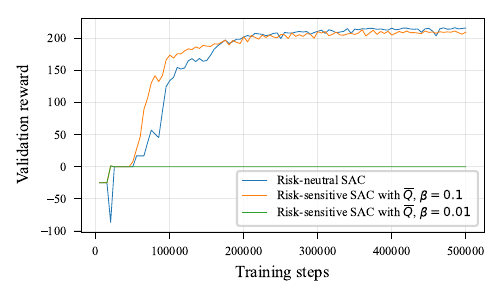}
    \caption{Convergence behavior of the risk-sensitive algorithm based on learning $\overline Q$ compared to risk-neutral \gls{SAC}.}
    \label{fig:Qbar}
\end{figure}

\section{Reformulating target in critic loss for numerically stable log-sum-exp computation} \label{app:log_sum_exp}

The critic loss function reads
\footnotesize
\begin{align*}
    J_Q(\theta) & =\mathbb E_{(s,a,r,d,s')\sim D}\left[\frac12\left( Q_\theta(s,a) - \hat Q \right)^2\right] , \\
    \hat Q & = \frac1\beta\log\Bigl( \pi_\phi(s')^T \Bigr. \\
    & \hspace{1.1cm} \Bigl. \cdot\exp\left(\beta\left( r + \gamma\left( Q_{\overline\theta} \left(s'\right) - \alpha\cdot\pi_\phi(s')^T\log\pi_\phi(s') \right)\right)\right)\Bigr) .
\end{align*}
\normalsize
The computation of this target $\hat Q$ has a log-sum-exp structure, which is known to be numerically unstable when implemented naively. Thus, we rewrite it using the following trick, based on Section~5.2 of \cite{Nesterov2005}: for some variable $x_a$ that depends on the action $a$, we define $y_a=x_a-\overline x$, with $\overline x$ independent of $a$, and get
\begin{equation*}
    \frac1\beta \log\left(\sum_a \pi(a\lvert s)\cdot e^{\beta x_a}\right) = \overline x + \frac1\beta \log\left(\sum_a \pi(a\lvert s)\cdot e^{\beta y_a}\right).
\end{equation*}
We choose $\overline x = \max_a x_a$ and use this identity to rewrite $\hat Q$ as follows:
\footnotesize
\begin{align*}
    \hat Q & = r - \gamma\alpha\cdot\pi_\phi(s')^T\log\pi_\phi(s') + \gamma \max_{a'}Q_{\overline\theta}(s',a') \nonumber \\
    & \quad +\frac1\beta\log\left(\sum_{a'} \pi_\phi(a'\lvert s')\cdot \exp\left(\beta\gamma\left( Q_{\overline\theta}(s',a') - \max_{a'}Q_{\overline\theta}(s',a') \right)\right) \right) .
\end{align*}
\normalsize

\section{Pseudocode} \label{app:pseudocode}

\begin{algorithm}
    \caption{Risk-sensitive discrete Soft Actor-Critic}
    \label{alg:pseudocode}
    \begin{algorithmic}[1] 
        \State Initialize NN parameters $\phi, \theta_i, \overline\theta_i$ for $i\in\{1,2\}$
        \State Initialize an empty replay buffer $D$
        \State Initialize the environment, observe $s$
        \For{each iteration}
            \State $a\sim\pi_\phi(a\lvert s)$
            \State Execute $a$ in the environment and observe $r$, $s'$
            \State $D \gets D\cup\{(s,a,r,s')\}$
            \State $s \gets s'$
            \State Sample a batch of transitions from $D$
            \State $\theta_i \gets \theta_i - \lambda\cdot \nabla_{\theta_i} J_Q(\theta_i)$ for $i\in\{1,2\}$
            \State $\overline\theta_i \gets (1-\tau) \cdot \overline\theta_i + \tau \cdot \theta_i$ for $i\in\{1,2\}$
            \State $\phi \gets \phi - \lambda\cdot \nabla_\phi J_\pi(\phi)$
        \EndFor
    \end{algorithmic}
\end{algorithm}

We provide the pseudocode for our risk-sensitive discrete \gls{SAC} algorithm in Algorithm~\ref{alg:pseudocode}. Here, $\theta_i$ for $i\in\{1,2\}$ refers to the parameters of the two critics which we train concurrently as explained in Section~\ref{sec:risk_neutral_SAC}. Besides, $\lambda$ is the learning rate and $\tau$ is the smoothing factor for the exponential moving average used to update the target critic parameters.

\section{Details on experiments} \label{app:details_experiments}

In the following, we firstly provide details on our state encoding and NN architectures. Secondly, we report the hyperparameters used in our experiments. 

\subsection{State encoding and neural networks}

The state of our environment can be naturally encoded as a 5x5 image with one channel per type of element contained in the system: the target location, the agent, and the items. The target location channel has only zero entries except for the target location, which we represent by a one. The agent channel has only zero entries except for the agent's current location. At this location, the entry is 0.5 if the agent has picked up but not yet delivered an item. Otherwise, this entry is one. The items channel has only zero entries except for locations with items that have not been picked up and did not disappear because the maximum response time elapsed. Such an entry is equal to the number of time steps remaining until the respective item will disappear, normalized by the maximum response time. 

We use a combination of convolutional and fully connected layers in our \gls{NNs}. Except for the output activation function, all \gls{NNs} have the same architecture. We use the following sequence of layers:
\begin{itemize}
    \setlength\itemsep{-0.1em}
    \item Convolutional layer 1: 2D convolutional layer with 32 filters of size 3x3 with stride one, same padding and ReLU activation
    \item Convolutional layer 2: 2D convolutional layer with 64 filters of size 2x2 with stride one, same padding and ReLU activation
    \item Convolutional layer 3: 2D convolutional layer with 64 filters of size 2x2 with stride one, same padding and ReLU activation
    \item Flatten layer
    \item Fully connected layer 1: fully connected layer with 256 units and ReLU activation
    \item Fully connected layer 2: fully connected layer with 256 units and ReLU activation
    \item Output layer: fully connected layer with 5 units (for the 5 possible actions) and softmax activation for the actor network and linear activation for the critic networks, respectively
\end{itemize}
We use L2 regularization for the NN parameters with a regularization coefficient of $10^{-4}$.

\subsection{Hyperparameters}

We train for 2 million steps, update the network parameters every 20 steps, and test the performance of the current policy on the validation data every 5,000 steps. During the first 20,000 steps, we collect experience with a random policy and do not update the network parameters. 

We set the discount factor to 0.99. We sample batches of size 512 from a replay buffer with maximum size 200,000. When we sample transitions from the replay buffer, we normalize the sampled rewards by dividing them by the standard deviation of all rewards currently stored in the replay buffer. For the critic loss, we use the Huber loss with a delta value of 2 instead of the squared error. Moreover, we use gradient clipping with a clipping ratio of 10 for actor and critic gradients. To update the NN parameters, we use the Adam optimizer with a learning rate of $3\cdot10^{-4}$. For the update of the target critic parameters, we use an exponential moving average with smoothing factor $5\cdot10^{-3}$. 

We tune the entropy coefficient individually per experiment and use values between 0.1 and 0.3 across our experiments. After 800,000 training steps, we set it to zero as explained in Section~\ref{sec:benchmarks}.

\section{L2 regularization} \label{app:L2_reg}

The non-robust discrete \gls{SAC} algorithm already uses L2 regularization with an L2 regularization coefficient of 0.0001, since we find through hyperparameter tuning that this maximizes performance on the validation data from the gradient~1 distribution. To investigate if L2 regularization improves the algorithm's robustness against distribution shifts, we increase the L2 regularization coefficient. Figure~\ref{fig:L2_reg} shows the experimental results. We also tried even larger values for the L2 regularization coefficient than the ones depicted here, but find that they lead to learning a ``do not move'' policy with zero rewards due to over-regularization. Based on the results in Figure~\ref{fig:L2_reg}, we conclude that L2 regularization does not improve the robustness of \gls{SAC} against distribution shifts within our environment. 

\begin{figure}[h]
    \centering
    \includegraphics[width=\linewidth]{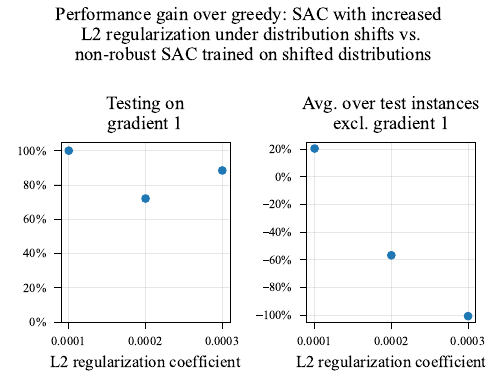}
    \caption{Performance of L2 regularization to improve robustness. The left plot shows the performance on the test data for the training distribution. The right plot shows the average over all other distributions, i.e., the performance under distribution shifts. The data points for an L2 regularization coefficient of 0.0001 show the results for the non-robust discrete \gls{SAC} algorithm. We report all results relative to \gls{SAC}'s performance when trained on the shifted, i.e., the true distribution as explained in Section~\ref{sec:performance_evaluation}.}
    \label{fig:L2_reg}
\end{figure}

\section{Combination of risk-sensitivity and entropy regularization} \label{app:combination}

We combine our risk-sensitive algorithm with entropy regularization to evaluate if this combination improves upon the two approaches' individual performance. Figure~\ref{fig:combination} shows the experimental results. None of the tested configurations for $\beta$ and $\alpha$ improves upon the best results for pure risk-sensitivity or pure entropy regularization in Figure~\ref{fig:results}.

\begin{figure}[h]
    \centering
    \includegraphics[width=\linewidth]{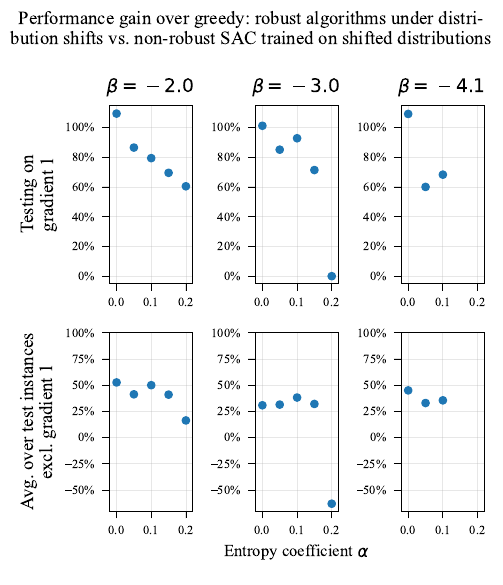}
    \caption{Performance of combining our risk-sensitive algorithm with entropy regularization to further improve robustness. The top row shows the performance on the test data for the training distribution. The bottom row shows the average over all other distributions, i.e., the performance under distribution shifts. The left-most data point in each plot shows the results for our risk-sensitive algorithm without entropy regularization. There are no data points for $\alpha\in\{0.15,0.2\}$ in the right-most plots, because the corresponding training runs converge to a validation reward that is substantially worse than the greedy performance on the gradient~1 distribution. We report all results relative to \gls{SAC}'s performance when trained on the shifted, i.e., the true distribution as explained in Section~\ref{sec:performance_evaluation}.}
    \label{fig:combination}
\end{figure}

\end{document}